\documentclass[twoside]{article}

%
\usepackage[accepted]{aistats2019}
%

\usepackage{algorithm}
\usepackage{algpseudocode}
\usepackage{amsthm}
\usepackage{amsfonts}
\usepackage{amsmath}
\usepackage{amssymb}
\usepackage{blkarray}
\usepackage{booktabs}
\usepackage[capitalise]{cleveref}
\usepackage{caption}
\usepackage{graphicx}
\usepackage{subcaption}
\usepackage{xfrac}
\usepackage{epstopdf}
\epstopdfsetup{
    suffix=,
}
\usepackage{rotating}  

\newtheorem{theorem}{Theorem}[section]
\newtheorem{lemma}[theorem]{Lemma}
\newenvironment{remark}[1][Remark]{\begin{trivlist}
\item[\hskip \labelsep {\bfseries #1}]}{\end{trivlist}}

\DeclareMathOperator*{\argmin}{arg\,min}




\begin{document}

%

%

\twocolumn[

\aistatstitle{On the Interaction Effects Between Prediction and Clustering}

\aistatsauthor{ Matt Barnes 
\And Artur Dubrawski 
}

\aistatsaddress{ Carnegie Mellon University \And Carnegie Mellon University} 
  ]

\begin{abstract}
Machine learning systems increasingly depend on pipelines of multiple algorithms to provide high quality and well structured predictions. This paper argues interaction effects between clustering and prediction (e.g.\ classification, regression) algorithms can cause subtle adverse behaviors during cross-validation that may not be initially apparent. In particular, we focus on the problem of estimating the out-of-cluster (OOC) prediction loss given an approximate clustering with probabilistic error rate $p_0$. Traditional cross-validation techniques exhibit significant empirical bias in this setting, and  the few attempts to estimate and correct for these effects are intractable on larger datasets. Further, no previous work has been able to characterize the conditions under which these empirical effects occur, and if they do, what properties they have. We precisely answer these questions by providing theoretical properties which hold in various settings, and prove that expected out-of-cluster loss behavior rapidly decays with even minor clustering errors. Fortunately, we are able to leverage these same properties to construct hypothesis tests and scalable estimators necessary for correcting the problem. Empirical results on benchmark datasets validate our theoretical results and demonstrate how scaling techniques provide solutions to new classes of problems.
\end{abstract}

\section{Introduction}
With the increasing prevalence of machine learning solutions, there is a growing concern over the interactions between algorithms in complex systems \cite{Sculley2014}. Leveraging multiple learning algorithms is a common technique to optimize performance and incorporate structured prior knowledge. For example, most autonomous vehicles benefit from using separate models for perception of traffic lights, object detection and tracking, localization, predicting actor behavior and ultimately planning an optimal trajectory. Although attempting to directly map from visual inputs to control outputs is simpler, this approach is known to achieve inferior performance.
Breaking the larger problem into a sequence of smaller problems may be advantageous for many reasons, but it can create additional challenges which must be addressed. 

In this paper, we address the class of interaction effects between clustering and prediction algorithms when attempting to estimate the out-of-cluster (OOC) loss. In the self-driving vehicle example, this encompasses pixel and LIDAR point segmentation (i.e clustering tasks) and prediction tasks based on these segmentations (e.g.\ object type classification, current and future state regression). We observe this is often also a concern in domains including online shopping, medical systems and census statistics, which are further explored in the experimental section. 

To elucidate the potential behavior induced by interaction effects between clustering and prediction algorithms, consider the problem of predicting heart disease from a collection of medical records. Each patient may have several records due to multiple hospital visits but it is unlikely we are able to collect multiple records for every patient. Thus, we must find a learner which generalizes well to new patients not in our training set. The typical approach is to match records belonging to the same individual using some record linkage (i.e.\ clustering) algorithm. Then the records are split by patient into a training and validation sets, such that all records for a single patient end up in either the training or validation set. This provides an unbiased estimate of the learner's error on new patients, i.e.\ the out-of-cluster loss.



The underlying challenge in this example is that we do not have access to the oracle clustering (i.e.\ the mapping from medical records to patients), but only a noisy approximation of it from the record linkage algorithm. Even in relatively low-noise domains like medical and census, these algorithms are known to be imperfect \cite{Steorts2014,Winkler1990,Winkler2006}.  If we instead take the approach of splitting the dataset according to the \emph{approximated} patient clustering, this effectively causes samples to spill across the true training and validation folds. Some samples which should have been grouped with a validation patient may have ended up with a training patient, and vice versa, without our knowledge.  In other words, the training and validation sets are no longer conditionally independent, leading to a problem called \emph{dependency leakage} \cite{Barnes2017}.  This allows the learner to overfit to patient-specific features and optimistically biases our OOC loss estimate.  For example, if a patient's records are incorrectly clustered and samples are partitioned into both the training and validation sets, the learner is rewarded for predicting whether a patient has heart disease based on their name -- which clearly will not generalize to new patients.  This overfitting need not be so blatant.  The learner may overfit to subtle patterns in a chest x-ray, a form of bias which may be hard to identify even by experienced radiologists.

This interaction between clustering errors and a prediction algorithm is particularly dangerous because our learner may appear to be doing well on the validation set, but does far worse when we deploy it in the real world on new patients.  This is compounded by the fact that some application domains (e.g.\ medical, census) involve extreme consequences, including patient misdiagnosis and misguided public policy decisions.  Note that this bias is undetectable during standard cross-validation procedures unless an explicit attempt is made to estimate and correct for it, which is the primary focus of this paper. Saeb et al.\ note that over half of selected medical studies failed to account for any clustering, allowing records for the same patient to occur in both the training and validation datasets, a significant statistical mistake \cite{Saeb2016}.

The contributions and organization of the remainder of this paper is as follows. We begin in \cref{sec:problem} by formalizing the problem and notation. In \cref{sec:properties}, we present theoretical properties out-of-cluster prediction loss given an approximate clustering which hold under various conditions. In \cref{sec:testing}, we demonstrate how these properties can be used to construct a simple hypothesis test for the presence of bias in cross-validation results.

Computational scalability is a significant barrier to estimate bias in cross-validation results, as previous results typically scale $\mathcal{O}(n^3)$ \cite{Barnes2017}. In \cref{sec:scalable}, we systematically alleviate these concerns by proposing function approximation and matrix sketching techniques which have \emph{constant} computational complexity relative to the dataset size $n$.  Interestingly, our matrix sketching technique is able to reduce the number of columns in a key structured matrix, unlike other matrix sketching techniques which typically reduce the number of rows.  Note this does not preclude also applying standard matrix sketching techniques.

Finally, we conducted empirical studies on Parkinson's, heart disease, 1994 US Census and Dota 2 video game data, and provide results in \cref{sec:empirical} which demonstrate the practical behavior of interaction effects closely aligns with our theoretical results. Further, we deploy our scalability techniques to previously intractable problem classes, while maintaining similar error levels on smaller problems.



\section{Problem Statement} \label{sec:problem}
More formally, let $X = x_1, \dotsc, x_{n_x}$ be the $n_x$ observed samples, $y$ be the corresponding labels, and $c: \{1, \dotsc, n_x\} \to \{1, \dotsc, k\}$ be the oracle clustering algorithm which partitions the data into $k$ clusters (e.g.\ $k$ is the number of patients, $n_x$ is the number of medical records). Our high level goal is to train a prediction algorithm $f$ which generalizes to new clusters, i.e.\ has low out-of-cluster loss.  The leave-one-cluster-out (LOCO) estimator
\begin{equation} \label{eq:loco}
 \widehat{\text{Err}}_{\text{LOCO}} = \frac{1}{|c_1^{-1}|}\sum_{j \in c^{-1}_1} \ell(y_j, f(x_j \mid x_{\bar c^{-1}_1}, y_{\bar c^{-1}_1})),
\end{equation}
is an unbiased estimator of the OOC loss\footnote{An unbiased estimate of training on $k-1$ clusters.  It is slightly biased compared to training on all $k$ clusters.}. Here, $\mathcal{T} = (X_{\bar c_1^{-1}}, Y_{\bar c_1^{-1}})$ and $\mathcal{V} = (X_{c_1^{-1}}, Y_{c_1^{-1}})$ denote the training and validation sets, where $c^{-1}_i$ and $\bar c^{-1}_i$ denote all sample indices belonging and not belonging to cluster $i$, respectively.  In other words, all samples belonging to one cluster form the validation set, and samples from the remaining clusters form the training set. Without loss of generality, we have arbitrarily chosen to leave the first cluster out.

The key question here is: how will errors in the clustering algorithm $\hat c$ effect our ability to train and validate the predictor $f$? By examining the LOCO estimator used to train and validate $f$, we see that errors in $\hat c$ result in noisy training and validation sets $\mathcal{\hat T}$ and $\mathcal{\hat V}$, where some samples have flipped between $\mathcal{T}$ and $\mathcal{V}$. 

We assume that clustering errors are made probabilistically with independent rate $p_0$, an assumption similar to one used in analyzing standard supervised learning with noisy class labels \cite{Natarajan2013,Liu2016}. If the clustering algorithm provides uncertainty estimates (e.g. Bayesian methods), we believe it would be possible to incorporate this uncertainty via importance weighting.
Further, we consider the unidirectional leakage scenario where samples move from $\mathcal{V}$ to $\mathcal{T}$ to create $\mathcal{\hat V}$ and $\mathcal{\hat T}$, such that $\mathcal{\hat T} \overset{n}{\sim} M_{P_\mathcal{T}, P_\mathcal{V}}(1-p_0, p_0)$, where $M_{a, b}(w_a, w_b)$ denotes the mixture distribution of $a$ and $b$ with weights $w_a$ and $w_b$ and $p_0$ is the leakage probability (a function of $\hat c$'s error). Our results apply to the other unidirectional leakage scenario where samples move from $\mathcal{T}$ to $\mathcal{V}$, and it may be possible to extend them to the bidirectional leakage scenario using similar techniques as \cite{Barnes2017}.

If the clustering is perfect (i.e.\ $\hat c = c$), then $p_0=0$.  Let $e_i$ be the expected loss at some other $p = \sfrac{i}{n}$ fraction of corrupted samples (we use the notational shorthand $e(p)$ to denote $e_{pn}$).  The expected OOC loss is equivalent to $e_0$ (i.e.\ zero dependency leakage, $p=0$), but we only observe the empirical loss at some $p_0 > 0$. Thus, our specific goals are to characterize the behavior of the interaction effects $e$ and to efficiently estimate $e_0$ in order to train and validate $f$.

\section{Theoretical Properties} \label{sec:properties}
In this section, we present theoretical results on interaction effects between prediction and clustering algorithms. First, we prove that under mild conditions, the sequence of losses $e = e_0, e_1, \dotsc, e_{n}$ is monotonically decreasing due to dependency leakage. Second, under slightly stronger conditions, the sequence will be convex with respect to $p$.  Intuitively, errors in the clustering algorithm allows the prediction algorithm to `peak' at samples in the validation distribution, which will improve its performance with diminishing returns.  Monotonicity has previously been conjectured, but never proven, and the conditions where it holds were uncertain.  To the authors knowledge, no previous work has discussed whether $e$ is convex.
 
We say a learner $f$ is optimal under its training distribution if
\begin{equation}
f(\cdot | \mathcal{T}) \in \argmin_{f \in \mathcal{F}} \mathbb{E}_{x, y \sim P_{\mathcal{T}}} \ell(f(x), y).
\end{equation}
Generally speaking, this tends to be true for large $|\mathcal{T}|$, small model complexity of $\mathcal{F}$ or sufficient regularization in $\ell$.  This does not imply $f$ is overfit to the training set, but in fact that it generalizes well across $P_\mathcal{T}$.  

\begin{theorem} \label{thm:monotone}
The sequence $e_0, e_1, \dotsc, e_{n}$ is monotonically decreasing if $f$ is optimal under its training distribution.
\end{theorem}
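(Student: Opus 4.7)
The plan is a population-level exchange argument that leverages the optimality hypothesis on $f$. For any pair of corruption levels $p_1 < p_2$ in $[0,1)$, let $f_1, f_2$ denote learners optimal under the mixture training distributions $P(p_1) = (1-p_1)P_\mathcal{T} + p_1 P_\mathcal{V}$ and $P(p_2) = (1-p_2)P_\mathcal{T} + p_2 P_\mathcal{V}$. Since the expected mixture loss decomposes linearly as $\mathbb{E}_{P(p)}[\ell(f)] = (1-p)L_\mathcal{T}(f) + p L_\mathcal{V}(f)$, where $L_\mathcal{T}(f)$ and $L_\mathcal{V}(f)$ denote the $P_\mathcal{T}$- and $P_\mathcal{V}$-expected losses, each optimality condition reduces to a single scalar inequality coupling $f_1$ and $f_2$ through their risks on $P_\mathcal{T}$ and $P_\mathcal{V}$.

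I would then introduce the two quantities $A = L_\mathcal{T}(f_1) - L_\mathcal{T}(f_2)$ and $B = L_\mathcal{V}(f_2) - L_\mathcal{V}(f_1) = e(p_2) - e(p_1)$, so the monotonicity claim is exactly $B \leq 0$. Expanding the optimality of $f_1$ under $P(p_1)$ and rearranging gives $(1-p_1)A \leq p_1 B$, and expanding the optimality of $f_2$ under $P(p_2)$ gives $(1-p_2)A \geq p_2 B$. Assuming for contradiction that $B > 0$, dividing both inequalities by $(1-p_i)$ and chaining them yields $\frac{p_2}{1-p_2} \leq \frac{p_1}{1-p_1}$, which contradicts strict monotonicity of $p/(1-p)$ on $[0,1)$ since $p_1 < p_2$. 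Therefore $B \leq 0$, and setting $p_1 = i/n, p_2 = (i+1)/n$ yields the discrete monotonicity $e_{i+1} \leq e_i$.

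The main subtlety, and essentially the only non-routine part, is interpreting the optimality assumption cleanly at the population level: $e(p)$ must be read as the $P_\mathcal{V}$-loss of a learner trained to minimize the mixture risk $\mathbb{E}_{P(p)}[\ell(\cdot)]$, so no finite-sample error enters the argument. Encouragingly, the exchange argument imposes no structural requirements such as convexity of $\ell$, realizability, or boundedness of the hypothesis class, so the proof should go through essentially verbatim under just the stated hypothesis. I do not expect any substantive obstacle beyond tracking signs in the optimality expansions and checking the degenerate edge cases ($p_1 = 0$, or $A = B = 0$ when $f_1 = f_2$), which satisfy the inequalities trivially.
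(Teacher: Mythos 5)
Your proposal is correct and is essentially the paper's own argument: the two optimality inequalities $(1-p_1)A \leq p_1 B$ and $(1-p_2)A \geq p_2 B$ are exactly the exchange conditions (\ref{eq:opti})--(\ref{eq:optj}) used in \cref{lem:monotone}, applied to the linear decomposition of the mixture risk with $a = \ell_{P_\mathcal{V}}$ and $b = \ell_{P_\mathcal{T}}$. Your version working only with non-strict inequalities handles ties in the $\argmin$ slightly more cleanly than the paper's case analysis, but the underlying idea is the same.
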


\begin{proof}
See \cref{app:monotone}.
\end{proof}

\begin{remark}
This theorem implies that the interaction will always \emph{optimistically} bias our cross-validation results.  This is in fact the most dangerous type of bias, as our heart disease classifier will perform well on the off-line hold-out set, but then perform worse when we deploy it in the real world on new patients or at new hospitals.
\end{remark}

If $f$ is not optimal among $\mathcal{F}$, it is possible to construct counterexamples such that $e_0, \dotsc, e_n$ is not monotonically decreasing.  

In our second theoretical result, we show that the expected loss is convex with respect to the strength of interaction effect $p$.  Let $\ell_P(f) = \mathbb{E}_{x, y \sim P}\ell(f(x), y)$ be the expected loss of the learner $f$ under distribution $P$.  Then the following theorem holds.
\begin{theorem} \label{thm:convex}
The sequence $e_0, e_1, \dotsc, e_{n}$ is convex if $f$ is optimal under its training distribution and $\ell_{P_\mathcal{T}}$ and $\ell_{P_\mathcal{V}}$ are strictly convex and differentiable over $f$.
\end{theorem}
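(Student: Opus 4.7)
The plan is to regard $e(p) = \ell_{P_\mathcal{V}}(f^*_p)$ as a twice-differentiable function of a continuous parameter $p \in [0,1]$, where $f^*_p = \argmin_f \big[(1-p)\ell_{P_\mathcal{T}}(f) + p\,\ell_{P_\mathcal{V}}(f)\big]$ is well-defined and unique by strict convexity, and then deduce the discrete convexity $e_{i-1} + e_{i+1} \geq 2 e_i$ as an immediate corollary of $e''(p) \geq 0$, since continuous convexity implies convexity when sampled on any arithmetic progression. The strict convexity and differentiability hypotheses make the first-order condition $(1-p)\nabla\ell_{P_\mathcal{T}}(f^*_p) + p\,\nabla\ell_{P_\mathcal{V}}(f^*_p) = 0$ regular enough for the implicit function theorem to give a smooth path $p \mapsto f^*_p$.

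As a warm-up that recycles the Theorem 1 machinery, the value function $V(p) = \min_f \big[(1-p)\ell_{P_\mathcal{T}}(f) + p\,\ell_{P_\mathcal{V}}(f)\big]$ is concave in $p$ as a pointwise minimum of functions affine in $p$. The envelope theorem gives $V'(p) = \ell_{P_\mathcal{V}}(f^*_p) - \ell_{P_\mathcal{T}}(f^*_p)$, and differentiating $V(p) = (1-p) G(p) + p H(p)$ directly, with $G(p) = \ell_{P_\mathcal{T}}(f^*_p)$ and $H(p) = \ell_{P_\mathcal{V}}(f^*_p) = e(p)$, yields $(1-p) G'(p) + p H'(p) = 0$. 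Combining the two identities compresses to $H'(p) = (1-p) V''(p)$, which is non-positive by concavity of $V$ and restates monotonicity. Differentiating once more reduces the target $H''(p) \geq 0$ to the sharper statement $V'''(p) \geq V''(p)/(1-p)$.

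To get at $V'''$, I would implicitly differentiate the FOC to obtain $\dot f_p = -(1-p)^{-1} B_p^{-1} \nabla \ell_{P_\mathcal{V}}(f^*_p)$ with $B_p = (1-p) \nabla^2 \ell_{P_\mathcal{T}}(f^*_p) + p\, \nabla^2 \ell_{P_\mathcal{V}}(f^*_p) \succ 0$ by strict convexity, then differentiate once more to get $\ddot f_p$ and assemble the contributions to $H''(p)$. The main obstacle is the cross term involving $\dot B_p$: it couples $\nabla^2 \ell_{P_\mathcal{T}}$ and $\nabla^2 \ell_{P_\mathcal{V}}$ along the moving point $f^*_p$ and at first glance has indefinite sign. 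The only viable route I can see is to repeatedly substitute the FOC identity $\nabla \ell_{P_\mathcal{T}}(f^*_p) = -\tfrac{p}{1-p}\,\nabla \ell_{P_\mathcal{V}}(f^*_p)$ to cancel the indefinite pieces, leaving a sum of non-negative quadratic forms in $\nabla \ell_{P_\mathcal{V}}(f^*_p)$ weighted by $B_p^{-1}$. Strict convexity of \emph{both} losses (rather than just one) is what keeps $B_p \succ 0$ uniformly and appears essential for this cancellation to go through; the same ingredient is what lets the argument be packaged as an implication about the sampled sequence once the continuous bound is in hand.
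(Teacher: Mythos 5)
Your reduction is clean and the warm-up is correct: the identities $V'(p)=\ell_{P_\mathcal{V}}(f^*_p)-\ell_{P_\mathcal{T}}(f^*_p)$, $(1-p)G'(p)+pH'(p)=0$, and $H'(p)=(1-p)V''(p)$ all check out and give a tidy alternative derivation of \cref{thm:monotone}. But the proof stops exactly where the difficulty begins. The step you defer -- ``repeatedly substitute the FOC identity to cancel the indefinite pieces, leaving a sum of non-negative quadratic forms'' -- cannot be carried out from the stated hypotheses, because the theorem is not a consequence of strict convexity and differentiability alone. Take the one-dimensional instance $\ell_{P_\mathcal{T}}(\theta)=\theta^2/2$ and $\ell_{P_\mathcal{V}}(\theta)=-\ln\theta$ on $\theta>0$ (add $\varepsilon\theta^2/2$ if you insist each loss have its own minimizer; this does not change the conclusion for $p$ bounded away from $1$). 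The first-order condition gives $f^*_p=\sqrt{p/(1-p)}$, hence
\begin{equation*}
e(p) \;=\; -\tfrac{1}{2}\ln\tfrac{p}{1-p}, \qquad e''(p) \;=\; \frac{1-2p}{2\,p^2(1-p)^2},
\end{equation*}
which is strictly negative for $p>1/2$: the sequence is monotonically decreasing but not convex. So the sign-indefinite term you identify (the one coupling $\dot B_p$ to the curvature of the path $p\mapsto f^*_p$) genuinely does not cancel; positive definiteness of $B_p$ controls $\dot f_p$ but says nothing about the convexity of the path, which is what $a'(f^*_p)\,\ddot f_p$ needs.

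This is also why the paper's own proof does not attempt your route. It goes through \cref{lem:convex}, which carries an \emph{additional} hypothesis not visible in the theorem statement: the ratio of derivatives $\dot\ell_{P_\mathcal{V}}/\dot\ell_{P_\mathcal{T}}$ must be convex and decreasing along the solution path $\Theta^*$. That condition is used to show the inverse path map $g^{-1}(\theta)=-\dot b/\dot\Delta$ is convex, so that $g$ is concave increasing and $a\circ g$ is convex by composition -- i.e., it is precisely the assumption that tames the second-derivative-of-the-path term you flag as indefinite. (In the counterexample above, $\dot a/\dot b=-\theta^{-2}$ is concave increasing, so the lemma's hypothesis fails.) To repair your argument you would need to import an analogous condition and show it implies $V'''(p)\ge V''(p)/(1-p)$; without it, no amount of FOC substitution will produce a sum of non-negative quadratic forms, because the inequality you are trying to prove is false in general.
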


\begin{proof}
See \cref{app:convex}
\end{proof}

Strictly convex and differentiable loss functions hold for a wide class of problems, including support vector machines and linear or ridge regression.  

\begin{remark}
The convexity of $e$ compounds the monotonic behavior in \cref{thm:monotone}, as it implies that even a small amount of error in our clustering $\hat c$ can cause large amounts of cross-validation bias in $f$.
\end{remark}

In \cref{sec:empirical}, we empirically demonstrate both these properties hold on all examined datasets.

\section{Hypothesis Testing} \label{sec:testing}
A principal question for data scientists is whether an interaction effect exists between their clustering and prediction algorithms.  Here, we show how to use the theoretical properties from \cref{sec:properties} to quickly construct a two-sample $t$-test for dependency leakage, which avoids the complexity of constructing an estimator for the OOC loss $\hat e_0$.

Consider the alternative hypothesis $H_a: e_0 > e(p_0)$, where $p_0 > 0$ is the unknown leakage probability and $e_0$ is the OOC loss with zero leakage (i.e.  no interaction effect).  By \cref{thm:monotone}, we can use a one sided test because $e(p_0) \geq e(p_{n})$.  
First, form $n_\mathcal{T}$ training folds each of size $n^\prime$ from $\mathcal{\hat T}$.  Additionally, form $n_\mathcal{T}^\prime$ training folds of size $n^\prime$ and $n_\mathcal{T} + n_\mathcal{T}^\prime$ validation folds of size $n_\mathcal{V}$ from $\mathcal{\hat V}$.

Train and validate $f$ on the disjoint $n_\mathcal{T} + n_\mathcal{T}^\prime$ training folds and corresponding validation folds.  Let $z = z_1, \dotsc, z_{n_\mathcal{T}}$ and $z^\prime = z_1^\prime, \dotsc, z_{n_\mathcal{T}^\prime}^\prime$ be the validation loss of $f$ trained on the folds from $\mathcal{\hat T}$ and $\mathcal{\hat V}$, respectively.  Let $\bar z$ and $\bar z^\prime$ be the mean of these two sequences.  Then
\begin{equation}
\bar z - \bar z^\prime \sim N(e(p_0) - e(p_{n}), \sigma^2(\bar z) + \sigma^2(\bar z^\prime)) \nonumber
\end{equation}
and the two-sample $t$-test statistic is
\begin{equation}
T = \frac{\bar z - \bar z^\prime}{\sqrt{\frac{s_1^2}{n_\mathcal{T}} + \frac{s_2^2}{n_\mathcal{T}^\prime }}}
\end{equation}
where $s_1^2$ and $s_2^2$ are the sample variances of $z$ and $z^\prime$, respectively.

Rejecting the null hypothesis $H_0^\prime: e(p_0) \leq e(p_{n})$ when $T > t_{1-\alpha, v}$ is a level $\alpha$ test, where $t_{1-\alpha, v}$ is the critical value of the $t$-distribution with $v$ degrees of freedom.  Further, by \cref{thm:monotone} and \cref{thm:convex}, $e(p_0) \neq e(p_{n}) \Rightarrow e_0 \neq e(p_0)$ so long as $p_0 > 0$.  Thus, rejecting the null hypothesis $H_0: e_0 \neq e(p_0)$ when $T > t_{1-\alpha, v}$ is also a level $\alpha$ test.

There are two takeaways to consider when using this test. The first powerful property is that it does not require actually knowing the clustering error or leakage probability  $p_0$ a priori, only that it is not perfect (a very weak assumption). Second, the Type II error rate of this test largely depends on the convexity of $e$.  If $p_0 < 0.5$ and $e$ is linear, then $e(p_0) - e(p_n) > e_0 - e(p_0)$ and the Type II error rate will actually be \emph{lower} than if we could directly test $e_0 \neq e(p_0)$.  Conversely, the Type II error rate becomes larger as $e$ becomes more strongly convex.

\section{Scalable Estimators} \label{sec:scalable}

\begin{algorithm}[t]
\caption{B3: Binomial Block Bootstrap}\label{alg:b3}
\begin{algorithmic}[1]
\Procedure{B3}{$f, \mathcal{\hat T}, \mathcal{\hat V}, p, n^\prime, t$}
\State $\bar b \gets \vec 0$
\For{$p_i$ in $p$}
\State $p^\prime \gets \frac{p_i - p_0}{1-p_0}$
\For{$j\gets 1$ to $t$}
\State $\mathcal{T}_j^\prime \overset{n^\prime}{\sim} M_{\mathcal{\hat T}, \mathcal{\hat B}}(1-p^\prime, p^\prime)$  
\State $\mathcal{V}_j^\prime \gets \mathcal{\hat V} \setminus \mathcal{T}^\prime_j$

\State $\hat b_i \gets \frac{1}{|\mathcal{V}_j^\prime|} \sum_{(x, y) \in \mathcal{V}_j^\prime} \ell (y, f(x\mid\mathcal{T}_j^\prime))$ 
\State $\bar b_i \gets \bar b_i + \frac{\hat b_i}{t}$
\EndFor
\EndFor
\State $A_{ij} \gets \mathbb{P}(\textup{Binomial}(n^\prime, p_i) = j)$
\State $\hat e, residual \gets A(A^\intercal A)^{-1}A^\intercal \bar b$
\State \Return $\hat e_0, residual$
\EndProcedure
\end{algorithmic}
\end{algorithm}

Existing asymptotically unbiased estimators for the OOC loss are limited by their need to solve a linear system of $n$ variables, where $n$ is the size of the bootstrap training set \cite{Barnes2017}.  In this section, we present two approaches for dramatically improving the computational efficiency of the unidirectional, known $p_0$ variant of the Binomial Block Bootstrap (B3) estimator, which is the core method of other variants.

We begin by recapping the Binomial Block Bootstrap (B3) estimator for the OOC loss, shown in \cref{alg:b3} \cite{Barnes2017}. The method leverages the fact that a resample with replacement from $\mathcal{\hat T}$ can be written as a binomial expectation over $e$ (see row 1 of \cref{eq:intuition}), by definition of the binomial distribution. Second, by adding additional corruption into $\mathcal{\hat T}$ in the form of samples from $\mathcal{\hat V}$, it effectively increases the clustering error $p_0$ to $p_1$ (see row 2 of \cref{eq:intuition}). Repeated operation of these principles allows constructing the fully defined linear system
\begin{align}
\begin{blockarray}{ccc}
& & 0 \quad 1 \quad \cdot \quad \cdot \quad n \\
\begin{block}{cc(c)}
  p_0 & & \leftarrow  \textup{Binomial pmf}  \rightarrow \\
  p_1 &  &   \cdot   \\
  \cdot &  &   \cdot   \\
  \cdot &  &   \cdot   \\
  1 & & \leftarrow  \textup{Binomial pmf}  \rightarrow \\
\end{block}
\end{blockarray}
\quad \begin{blockarray}{c}
  \\
\begin{block}{(c)}
   \\
  \\
  e   \\
   \\
   \\
\end{block}
\end{blockarray}
\; &=\;
\begin{blockarray}{c}
 \\
\begin{block}{(c)}
   \\
  \\
  b   \\
   \\
   \\
\end{block}
\end{blockarray} \label{eq:intuition} \\
A(p_0) \hspace{5.5em} e\quad  &= \quad b \nonumber
\end{align}
where matrix $A \in \mathbb{R}^{m \times (n+1)}$ is defined by
\begin{equation}
A_{ij} = \mathbb{P}(\textup{Binomial}(n, p_i) = j) \label{eq:A}.
\end{equation}
Vector $b$ is formed by the average empirical loss of repeatedly sampling with replacement from $\mathcal{\hat T}$ and $\mathcal{\hat V}$ at $m$ increasing levels of corruption $p_0, p_1, \dotsc, 1$.  The insight of this method is that by \emph{increasing} dependency leakage by further mixing $\mathcal{\hat T}$ and $\mathcal{\hat V}$ and thus increasing $p$ from $p_0$ towards $1$, one can extrapolate the loss at zero clustering error, using the structured matrix $A$.  Although the B3 estimator is asymptotically unbiased, solving the linear system has $\mathcal{O}(n^{3})$ cost, and forming the loss estimate $b$ has $\mathcal{O}(n)$ computational cost.  If the prediction algorithm $f$ has an expensive training procedure (e.g.\ deep neural networks), the latter term may outweigh the former due to a large fixed constant. 

\subsection{Basis Function Approximation}
Perhaps the most straightforward approach to scaling these estimators is through function approximation, which also conveniently provides a natural form of regularization.  We parameterize $e$ by a set of $s$ basis functions $\psi_1, \dotsc, \psi_s$, such that
\begin{equation} \label{eq:scale-basis}
e_i = \xi_1 \psi_1(i) + \xi_{2} \psi_{2}(i) + \dotsc + \xi_s \psi_s(i)
\end{equation}
where $\xi_{1}, \dotsc, \xi_{s} = \xi \in \mathbb{R}^s$ are the $s$ parameters.  Then $e = \Psi \xi$ where $\Psi \in \mathbb{R}^{(n + 1) \times s}$ is the matrix of basis values.

Instead of solving the linear system $Ae = b$, where $A \in \mathbb{R}^{m \times (n + 1)}$ and we choose $m \geq n$, we can now solve $A^\prime \Psi \xi = b^\prime$ where $A^\prime \in \mathbb{R}^{m^\prime \times (s + 1)}$ and we choose $m^\prime \geq s$.  Note the size of this system no longer depends on the number of samples $n$.  Instead, it depends on the number of parameters in our approximation of $e$, which will be a fixed constant.  This new linear system is well behaved, depending on the choice of basis function $\psi$.

\begin{theorem}
Let $\psi_0, \dotsc, \psi_s$ be a set of $s$ unisolvent, bounded and continuous functions over $[0, 1]$ and let
\begin{equation*}
e_i = \xi_0\psi_0\left(\frac{i}{n}\right) + \dotsc + \xi_s\psi_s\left(\frac{i}{n}\right).
\end{equation*}
Then $A\Psi$ is invertible as $n \to \infty$.
\label{thm:basis}
\end{theorem}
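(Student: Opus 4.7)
The plan is to recognize each entry of $A\Psi$ as a Bernstein polynomial and then pass to the limit. Concretely, with $\Psi_{jk} = \psi_k(j/n)$, the $(i,k)$ entry of $A\Psi$ expands as
\begin{equation*}
(A\Psi)_{ik} \;=\; \sum_{j=0}^{n} \binom{n}{j} p_i^{\,j} (1-p_i)^{n-j}\, \psi_k\!\left(\tfrac{j}{n}\right) \;=\; B_n(\psi_k)(p_i),
\end{equation*}
which is exactly the degree-$n$ Bernstein polynomial approximating $\psi_k$ evaluated at $p_i$. This is the key observation: the binomial pmf rows of $A$ are the Bernstein basis, so $A\Psi$ is the collocation matrix of Bernstein-smoothed basis functions at the corruption levels $p_0,\dotsc,p_s$.

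Next I would invoke the Bernstein--Weierstrass approximation theorem, which is applicable precisely because each $\psi_k$ is bounded and continuous on $[0,1]$: we have $B_n(\psi_k)(p) \to \psi_k(p)$ uniformly on $[0,1]$ as $n\to\infty$. In particular, entry by entry,
\begin{equation*}
(A\Psi)_{ik} \;\longrightarrow\; \psi_k(p_i) \;=:\; M_{ik}, \qquad n\to\infty,
\end{equation*}
so that $A\Psi \to M$, where $M$ is the $(s+1)\times(s+1)$ collocation matrix of the basis $\{\psi_0,\dotsc,\psi_s\}$ at the distinct evaluation points $\{p_0,\dotsc,p_s\}$.

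By the assumed unisolvence of $\{\psi_0,\dotsc,\psi_s\}$ over $[0,1]$, this collocation matrix $M$ is nonsingular: unisolvence is exactly the statement that interpolation in this basis at any distinct set of points in $[0,1]$ admits a unique solution, i.e.\ $\det M \neq 0$. Finally, since the determinant is a continuous (polynomial) function of the matrix entries, $\det(A\Psi) \to \det M \neq 0$, so $A\Psi$ must itself be invertible for all sufficiently large $n$, which proves the claim.

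The main obstacle, once the Bernstein connection is spotted, is really just bookkeeping: one must be sure the evaluation points $p_0,\dotsc,p_s$ are distinct (which holds by construction of the corruption schedule in B3) and must correctly interpret ``unisolvent'' as guaranteeing the nonvanishing of the limiting generalized-Vandermonde determinant. Boundedness and continuity of the $\psi_k$ are precisely what is needed to justify the uniform Bernstein convergence step; without both, the pointwise limit could fail or the determinant argument could break.
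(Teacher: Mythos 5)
Your proof is correct and is essentially the same as the paper's: both identify $(A\Psi)_{ik}$ as the Bernstein-smoothed evaluation of $\psi_k$ at $p_i$, show entrywise convergence to the collocation matrix $\psi_k(p_i)$, and conclude invertibility from unisolvence at distinct points. The only cosmetic difference is that you cite the Bernstein--Weierstrass theorem directly while the paper re-derives that convergence probabilistically (weak law of large numbers, continuous mapping, Portmanteau); your explicit determinant-continuity step to pass from ``entries converge'' to ``invertible for large $n$'' is actually a detail the paper leaves implicit.
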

\begin{proof}
See \cref{app:basis_proof}.
\end{proof}

\subsection{Matrix Sketching}
Second, we propose a new matrix sketching technique which reduces the number of columns in the structured matrix $A$.  Unlike typical matrix sketching techniques, which reduce the number of rows, we are able to reduce the number of columns and thus the dimensionality of the solution $e$ by leveraging the structure in $A$ and properties of $e$ from \cref{thm:monotone}.  After reducing the number of columns, one could further apply standard matrix sketching techniques to also reduce the number of rows.  Our algorithm guarantees recovering $e_0$ within a linear factor of the true value.

Consider the setting where $m \leq n$ and the system $Ae = b$ is underdetermined.  This is especially relevant for large datasets, where it is computationally infeasible to sample at $m > n$ levels of leakage or perhaps even solve for $n$ unknowns.  Let $S \in \mathbb{R}^{m \times (k+1)}$, $m > k$ be our sketching matrix, where $S$ is formed such that the first column of $S$ equals the first column of $A$, i.e.\ $S_0 = A_0$.  Partition the remaining $n$ columns of $A$ into $k$ sets, for example using $k$-medoids or simply grouping adjacent columns together (since by the definition of $A$, these will be close together).  Let $r: \{0, \dotsc, n\} \to \{0, \dotsc, k\}$ be the resulting partition, where $r(0) = 0$ is the singleton partition of the first column.  Finally, form the remaining columns of $S$ from the medoids of the $k+1$ sets.  Each column in $A$ is within an $\epsilon$-ball of at least one column in $S$, i.e.
\begin{equation}
\epsilon = \underset{i \in \{0, \dotsc, n\}}{\textup{max}}\Vert A_i - S_{r(i)}\Vert  \nonumber
\end{equation}

\begin{theorem} \label{thm:sketch}
Let $e^\prime$ be the solution to the sketched system $Se^\prime = b$ and $s$ be the first row of $s^{-1}$.  The error between the true and sketched solution is bounded by
\begin{equation}
|e_0^\prime - e_0| \leq \epsilon n \Vert s^\prime\Vert e_0.
\end{equation}
\end{theorem}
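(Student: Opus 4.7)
The plan is to show that the sketched system has a nearby feasible point whose first coordinate exactly equals $e_0$, and then use the conditioning of $S$ (through the first row of its inverse) to transfer this into a bound on $e_0' - e_0$.

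First I would construct a ``folded'' vector $\tilde e \in \mathbb{R}^{k+1}$ from the true solution $e$ via the partition $r$: set $\tilde e_0 = e_0$ (the singleton partition) and, for $j \ge 1$, $\tilde e_j = \sum_{i:\,r(i)=j} e_i$. The point of this construction is that
\begin{equation*}
S\tilde e = S_0 e_0 + \sum_{j \ge 1} S_j \sum_{i:\,r(i)=j} e_i = A_0 e_0 + \sum_{i \ge 1} S_{r(i)} e_i,
\end{equation*}
so that, using $Ae = b$,
\begin{equation*}
b - S\tilde e = Ae - S\tilde e = \sum_{i \ge 1} \bigl(A_i - S_{r(i)}\bigr) e_i.
\end{equation*}

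Next I would bound this residual. By the triangle inequality and the $\epsilon$-cover property $\|A_i - S_{r(i)}\| \le \epsilon$,
\begin{equation*}
\|b - S\tilde e\| \le \epsilon \sum_{i \ge 1} |e_i|.
\end{equation*}
Losses are nonnegative, and by \cref{thm:monotone} the sequence $e_0,\dotsc,e_n$ is monotonically decreasing, so each $e_i \le e_0$ and $\sum_{i \ge 1} e_i \le n\, e_0$. Hence $\|b - S\tilde e\| \le \epsilon n\, e_0$.

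Finally I would convert this residual bound into a bound on the first coordinate of $e' - \tilde e$. Since $Se' = b$, we have $S(e' - \tilde e) = b - S\tilde e$, so $e' - \tilde e = S^{-1}(b - S\tilde e)$ (interpreting $S^{-1}$ as the left inverse or pseudoinverse when $m > k+1$). Reading off the first coordinate and using $\tilde e_0 = e_0$ gives
\begin{equation*}
e_0' - e_0 = s^\top (b - S\tilde e),
\end{equation*}
where $s$ denotes the first row of $S^{-1}$. Cauchy--Schwarz then yields $|e_0' - e_0| \le \|s\| \cdot \epsilon n\, e_0$, which is the claimed bound.

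The main obstacle I expect is not the algebra but justifying the two structural facts that make the folding work: (i) that $\tilde e$ is a legitimate comparison point for the sketched system (handled by the explicit construction above, exploiting that $S_0 = A_0$ is preserved exactly), and (ii) that summing $e_i$ over $i \ge 1$ does not blow up faster than $n\, e_0$, which is precisely where \cref{thm:monotone} (monotonicity of $e$) is essential --- without it, $\sum_i e_i$ could in principle be much larger than $n\, e_0$ and the linear-in-$n$ factor in the theorem would not hold.
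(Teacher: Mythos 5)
Your proof is correct and follows essentially the same route as the paper's: both reduce to the identity $e_0' - e_0 = \sum_{i\geq 1} s^\top\bigl(A_i - S_{r(i)}\bigr)e_i$ (the paper via $e' = S^{-1}Ae$, you via the folded comparison vector $\tilde e$, which is an equivalent repackaging), then apply Cauchy--Schwarz with the $\epsilon$-cover property and \cref{thm:monotone} to get $\epsilon n\Vert s^\prime\Vert e_0$. Your explicit construction of $\tilde e$ makes the role of $S_0 = A_0$ somewhat more transparent, but it is not a different argument.
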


\begin{proof}
See \cref{app:sketch_proof}
\end{proof}

\subsection{Connection to B\'ezier curves and Bernstein polynomials}
The B3 estimator in \cref{eq:intuition} has close ties to the Bernstein basis and B\'ezier curves which have not previously been realized.  Notice that each column of $A$ corresponds to a Bernstein basis function evaluated at at $p_0, \dotsc, 1$.  Thus, the B3 estimator is equivalent to solving for the Bernstein coefficients or B\'ezier control points $e$, where the system is constructed through the B3's bootstrapping process. A more detailed mathematical connection is provided in \cref{app:bernstein}.

\section{Empirical study} \label{sec:empirical}

Finally, we conducted an empirical, finite-sample study which validates the theoretical properties in \cref{sec:properties} and demonstrates the computational speed-ups provided in \cref{sec:scalable}. For comparison, we consider three benchmark estimators for the OOC loss -- IID, LOCO and the B3 estimator with a fourth order trend filter and monotonic regularizers (T4+mono). The latter is the empirical state-of-the-art method, though suffers from computational scalability issues. IID is the typical cross-validation split, where samples are uniformly randomly split into training and validation sets, which does not account for the latent clustering. LOCO is the leave-one-cluster-out estimator described in \cref{eq:loco} using an approximated clustering $\hat c$ with an error of $p_0 = 0.1$.

In all experiments, we used a linear SVM as the predictor $f$. This is a best-case scenario, as interaction effects depend on the predictor $f$'s ability to overfit to mistakes from the clustering algorithm $\hat c$. Thus, as the complexity of the predictor class increases, the interaction effect worsens.

\begin{figure}[t] \label{fig:times}
    \centering
    \includegraphics[width=\columnwidth]{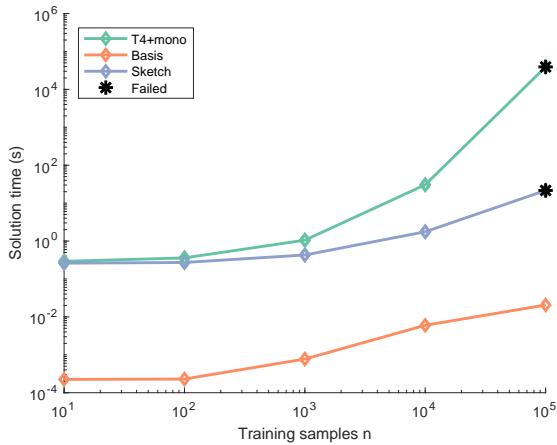}
    \caption{Computational scalability results on synthetically generated datasets. Our methods (Sketch, and in particular, Basis) are significantly faster than existing methods (T4+mono). ``Failed'' indicates the SDPT3 solver failed to find an accurate solution.}
\end{figure}

\begin{figure*}[t]
    \begin{subfigure}[t]{0.495\textwidth}
        \centering
        \includegraphics[width=\textwidth]{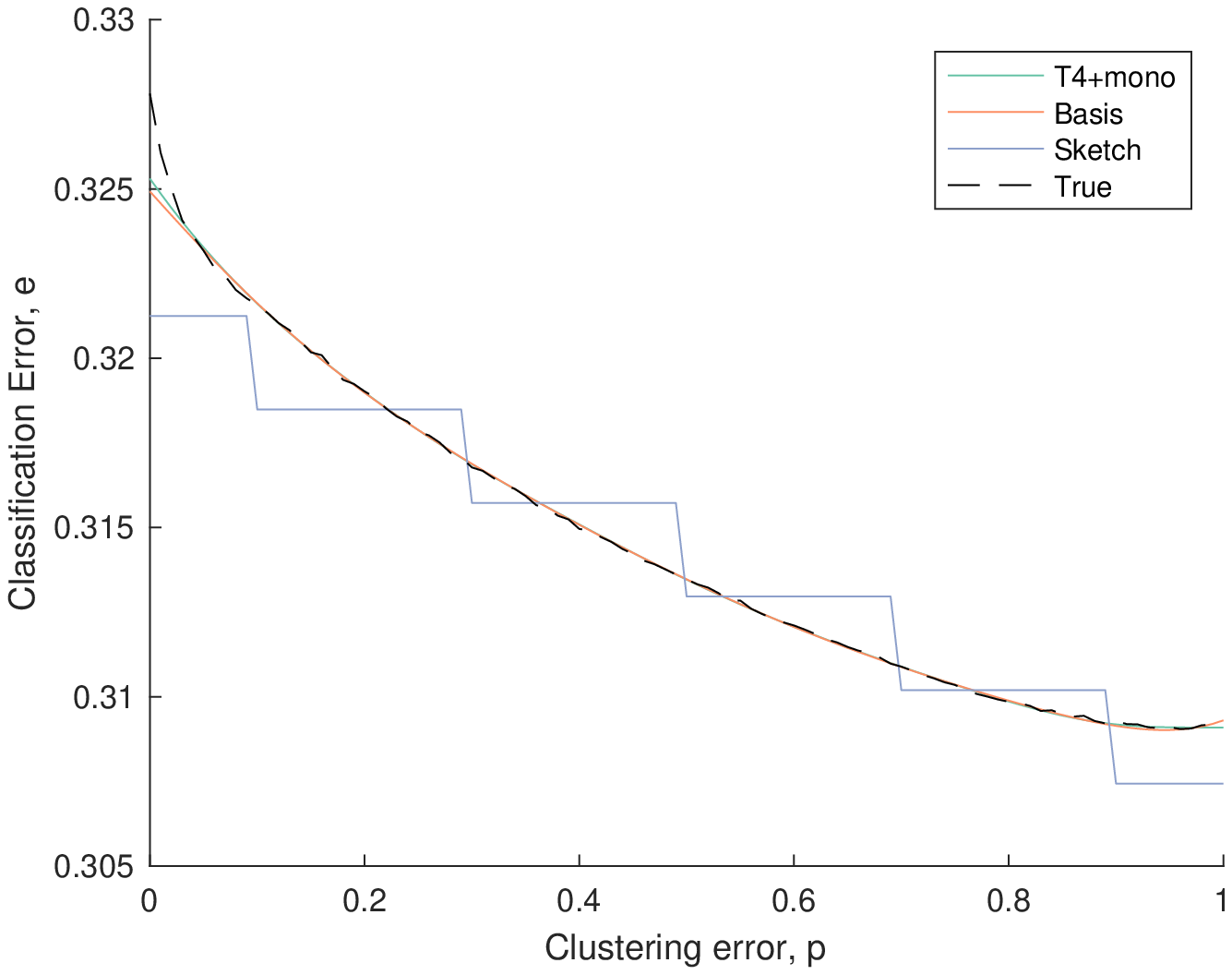}
        \caption{1994 US Census}
    \end{subfigure}
    \begin{subfigure}[t]{0.495\textwidth}
        \centering
        \includegraphics[width=\textwidth]{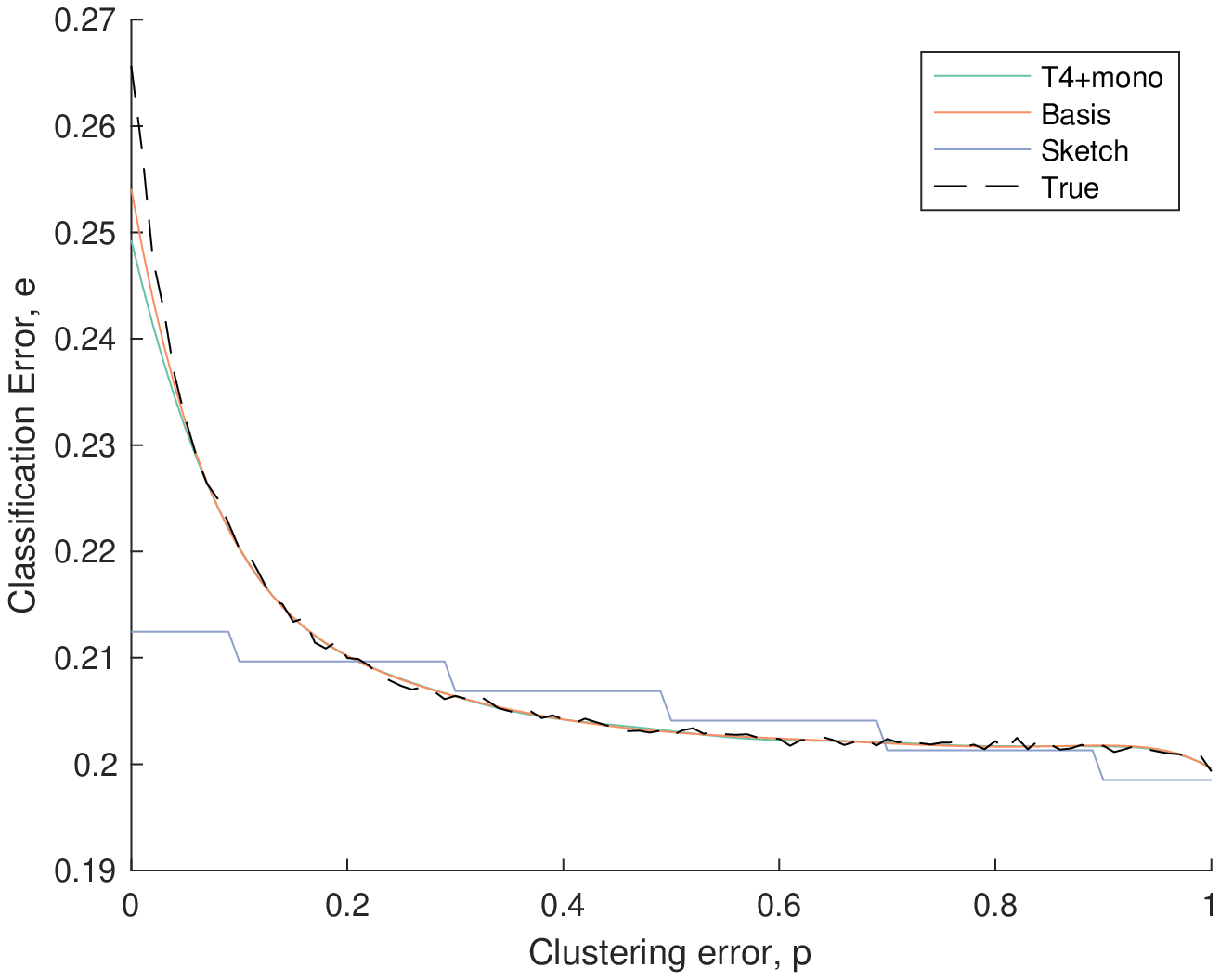}
        \caption{Heart Disease}
    \end{subfigure}
    \caption{Empirical results show the loss is indeed convex and monotonically decreasing, validating our theoretical results in \cref{sec:properties}.  Note our methods are able to recover the full loss in addition to the true OOC loss $e_0$. Plots for the remaining experiments are provided in \cref{app:additional}}.
    \label{fig:fullfit}
\end{figure*}

\begin{figure*}[t]
    \begin{subfigure}[t]{0.495\textwidth}
        \centering
        \includegraphics[width=\textwidth]{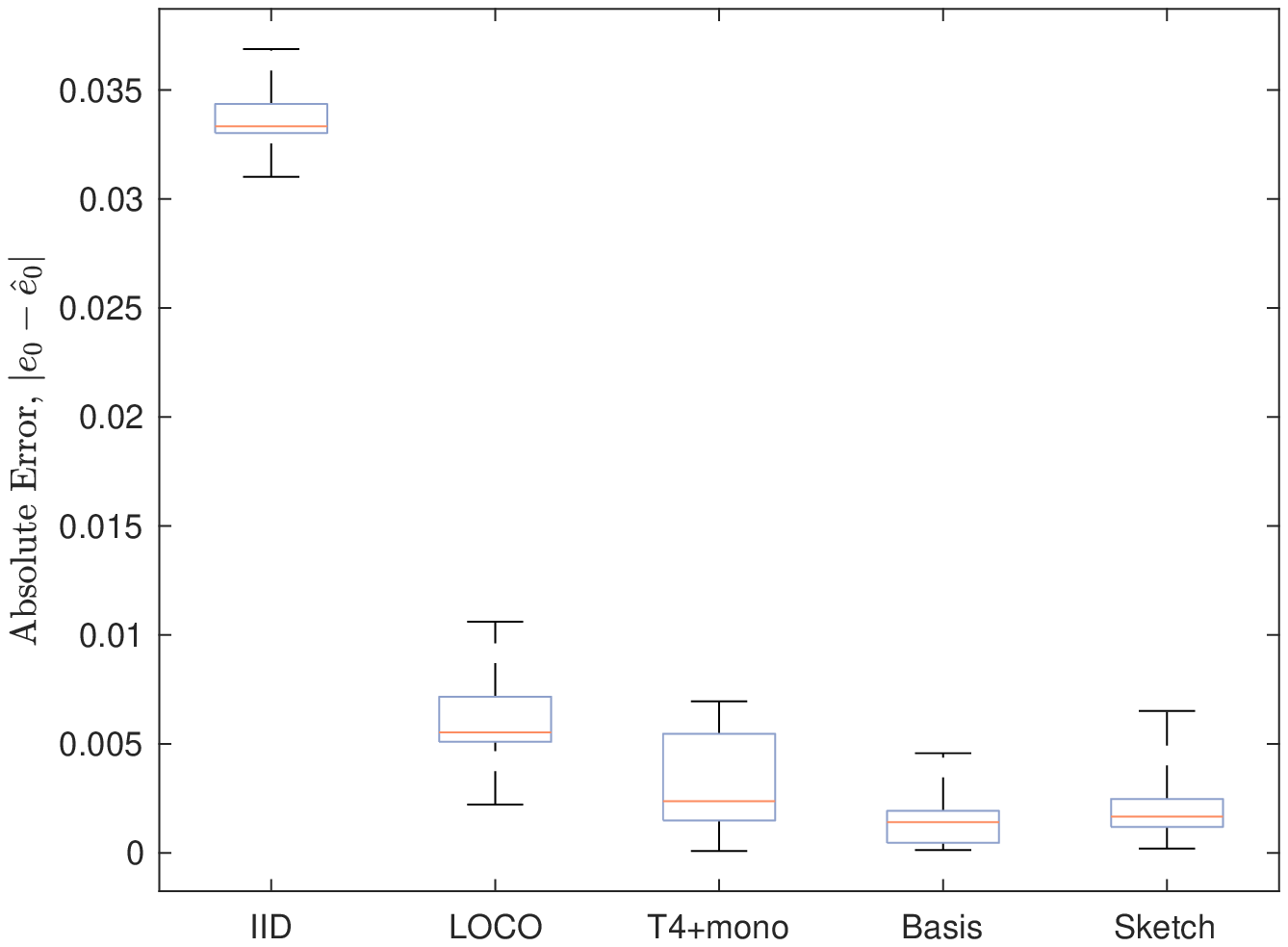}
        \caption{Parkinson's}
    \end{subfigure}
    \begin{subfigure}[t]{0.495\textwidth}
        \centering
        \includegraphics[width=\textwidth]{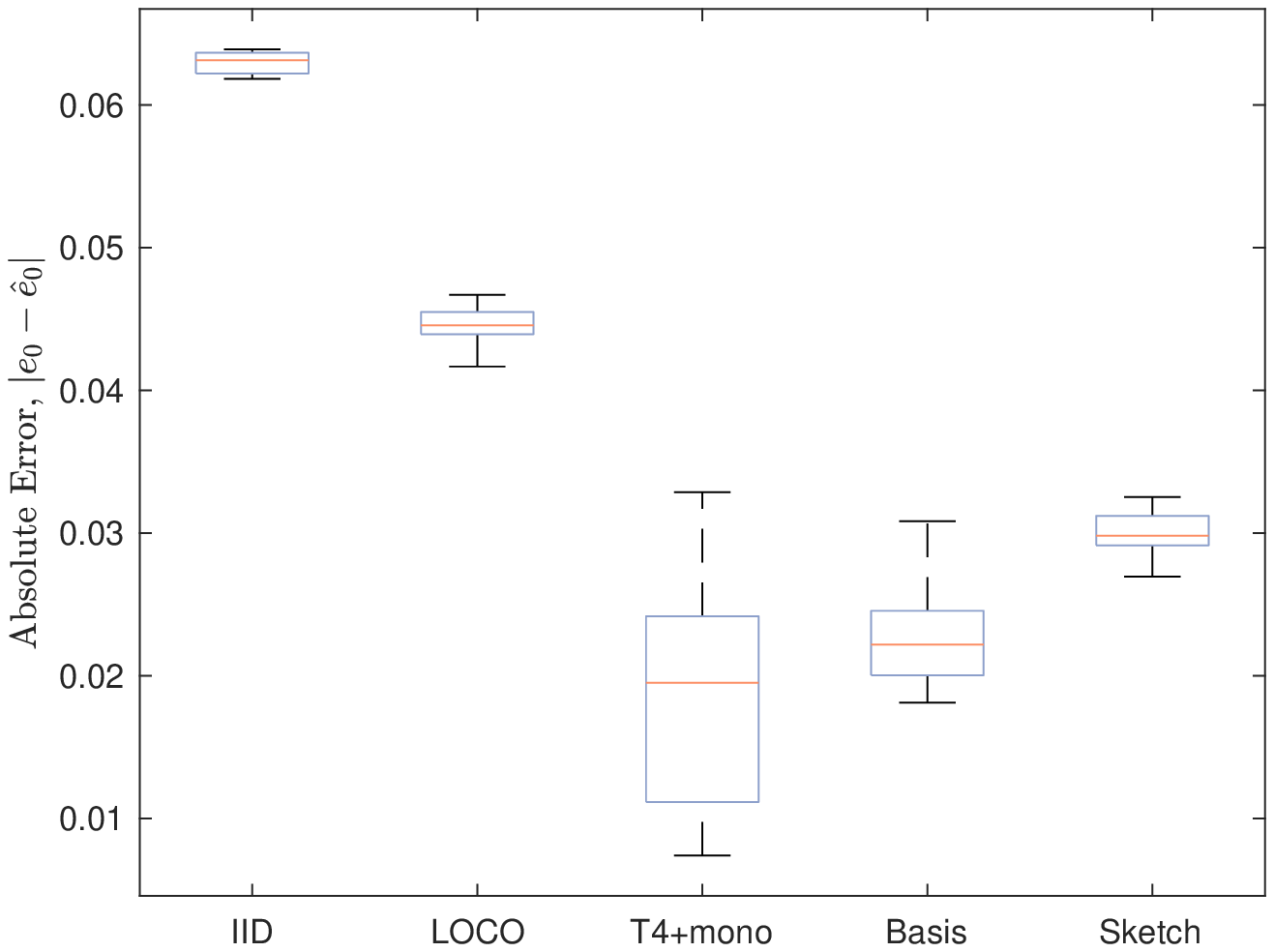}
        \caption{Heart Disease}
    \end{subfigure}
    \\
    \begin{subfigure}[t]{0.495\textwidth}
        \centering
        \includegraphics[width=\textwidth]{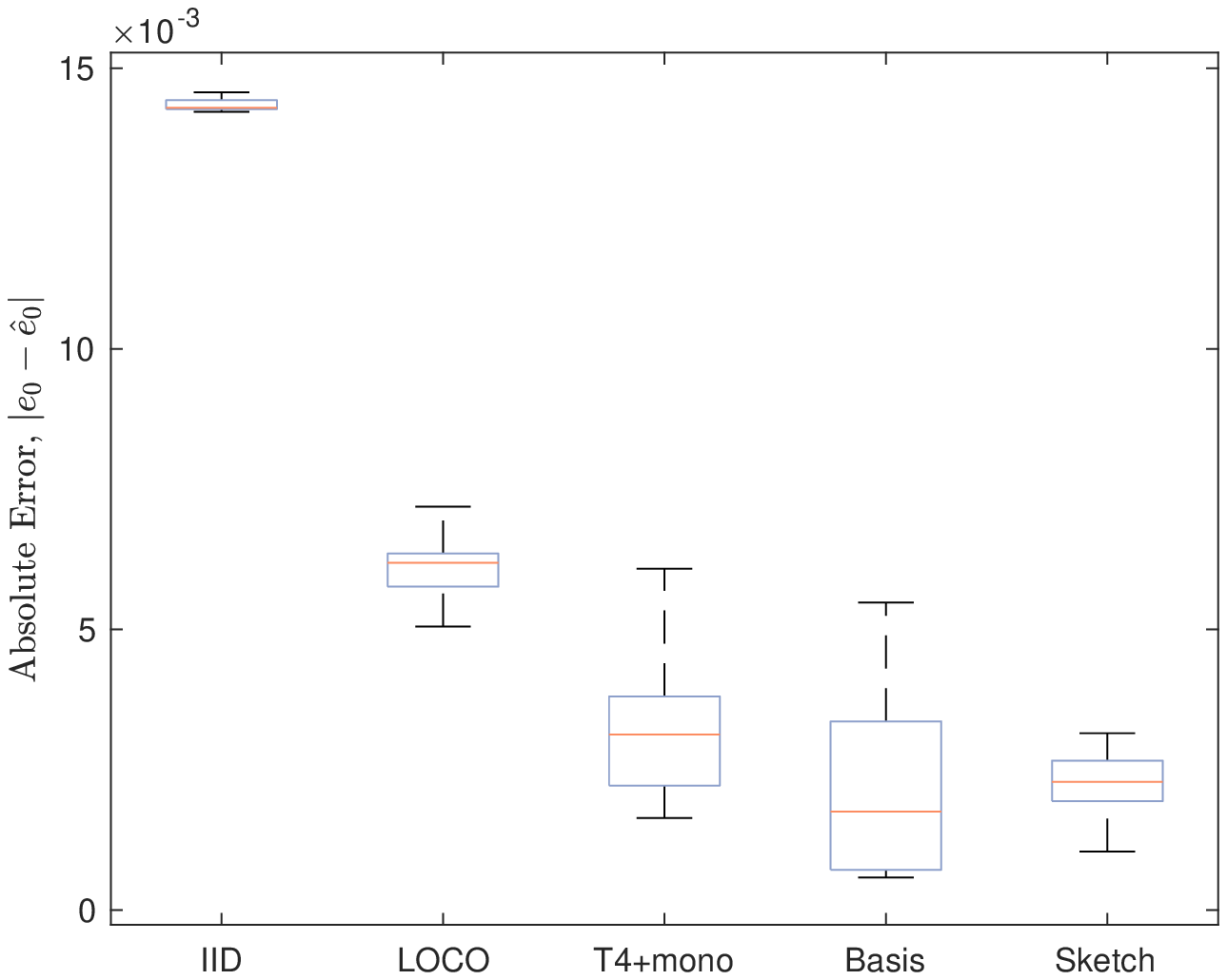}
        \caption{1994 US Census}
        \label{fig:adult}
    \end{subfigure}
    \begin{subfigure}[t]{0.495\textwidth}
        \centering
        \includegraphics[width=\textwidth]{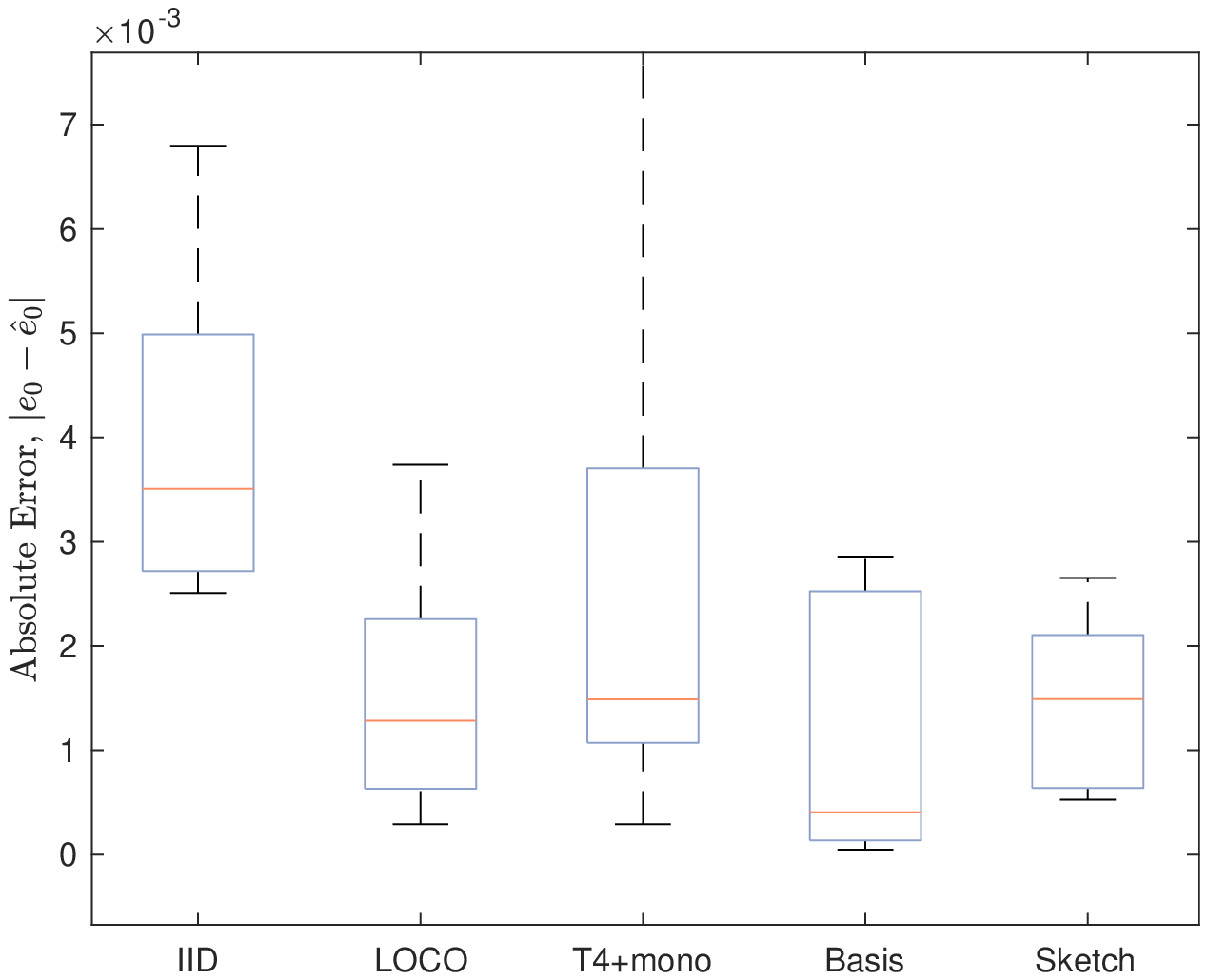}
        \caption{Dota 2}
    \end{subfigure}
    \caption{Estimating the OOC loss $e_0$.  Our function approximation and novel matrix sketching techniques perform comparably to existing methods at significantly reduced computational cost.}
    \label{fig:boxplots}
\end{figure*}

Note that in order to compute the error of our estimators, we are required to use a dataset where the oracle clustering is indeed available. For many of these experiments, we used data collected in very controlled settings to guarantee no clustering error in the ground truth. In more practical scenarios, this information would not be available. We formed the training and validation sets by splitting the approximate clusters according to \cref{sec:problem}, and we controlled clustering errors by flipping samples from $\mathcal{V}$ to $\mathcal{T}$ with uniform, i.i.d. rate according to $p_0$. Complete experimental details are provided in \cref{app:details}.

\begin{table}
  \caption{Computational timing results demonstrate our methods, and in particular the basis function approximation technique, are significantly faster than the previous state-of-the-art B3 estimator with fourth order trend filter and monotonicity constraint (T4+mono). Results shown in seconds.}
  \centering
  \begin{tabular}{lccc}
    \toprule
    & \multicolumn{3}{c}{Method}                   \\
    \cmidrule(r){2-4}
    Dataset     & T4+mono     & Sketching & Basis \\
    \midrule
    1994 US Census     & 0.5662       & 0.4059 & \bf{7.822e-5}   \\
    Heart     & 0.5847 & 0.4105 & \bf{6.582e-5}      \\
    Parkinson's & 0.6194 & 0.4338 & \bf{2.043e-5} \\
    Dota 2    & 1.0965 & 0.4678 & \bf{1.946e-5} \\
    \bottomrule
  \end{tabular}
  \label{tab:times}
\end{table}

\paragraph{Computational Scalability}
The proposed approximation techniques, and especially the basis function approximation technique, are faster than existing OOC estimators and are tractable on larger problem classes. To compare performance across a large range of dataset sizes, we generated increasingly large synthetic training sets and compared solution times in \cref{fig:times}. All methods used only 10 corruption levels (i.e.\ the number of rows in $A$), the smallest reasonable number required to find an accurate solution. We observed that increasing the number of rows in $A$ exponentially increased solution times. Thus, these results are likely the largest datasets appropriate for existing methods. In particular, notice that the solver failed to find accurate solutions on the largest problem class for all methods except for with the basis approximation technique.

Timing results on real world datasets (described in the following sections) are reported in \cref{tab:times}. Similarly, we find the basis approximation technique is the fastest by several orders of magnitude.

Constrained linear programs (e.g.  T4+mono, sketching) were solved using SDPT3's infeasible path-following algorithm, for unconstrained linear systems we took advantage of fast QR solvers (a major reason the basis method is so efficient). All optimizations were performed using an Xeon Gold 6152 CPU @ 2.10GHz and 754 GB RAM. We found that T4+mono, and to a lesser extent, the sketching approximation, required the majority of this memory for the largest problem classes.

\paragraph{Interaction Characteristics}
\cref{fig:fullfit} demonstrates that interaction effects between the clustering and prediction algorithm cause the cross-validation error $e$ to decay monotonically and convexly, as predicted by \cref{thm:monotone} and \cref{thm:convex}. This visually demonstrates the expected adverse behavior -- if our clustering algorithm makes even a few mistakes, we may think our predictor has a low error rate, but when we deploy it in the real world on new clusters, it will perform far worse. Empirically, the interaction biases cross-validation results by upwards of 25\%, but our methods are largely able to correct for this bias. Note our methods not only recover the true OOC loss $e_0$, but also the entire curve $e_1, e_2, \dotsc, e_n$.

\paragraph{Estimator Error}
Finally, we empirically test whether the approximations introduce any additional error into the OOC estimate. \cref{fig:boxplots} shows that the approximations perform comparably to the previous state-of-the-art T4+mono estimator, at significantly reduced computational cost. The specific experiments are briefly described below, see \cref{app:details} for details.

\emph{Parkinson's} \quad
In the first experiment, we attempted to predict whether a patient has Parkinson's disease based on multiple voice recordings featurized according to doctor specifications \cite{UCI}. Here, each cluster corresponds to an individual, and each cluster contains multiple voice recordings. The OOC error corresponds to the ability to predict Parkinson's on new individuals not in the training set.

\emph{Heart Disease} \quad
In the second experiment, we use medical records from four hospitals in Switzerland; Hungary; Cleveland and Long Beach, USA \cite{UCI}.  Given a patient medical record, including vital signs and demographics, the task is to train a heart disease classifier which performs well at new, previously unseen hospitals.  Since we do not have access to multiple records per patient, we instead choose the related task of generalizing across hospital clusters.  

\emph{1994 US Census} \quad
In the third experiment, we consider the issue of machine learning bias against certain populations in the 1994 US Census due to interaction effects \cite{UCI}.  Given a person's occupational, educational and demographic information, our task is to predict whether a person's income is greater than US\$50k per year (finer resolution income data cannot be publicly disclosed).  In particular, we wish to train a classifier which performs well across people from all origin countries.  We arbitrarily chose Indian and Canadian immigrants as our leave-one-out clusters, and natural born citizens, Salvadoran, German, Mexican, Filipino and Puerto Rican immigrants as our training set\footnote{We left out two clusters due to a small number of immigrants from some countries in the dataset.}.  

Our results, presented in \cref{fig:adult}, validates our claim that interaction effects can bias our learner against certain populations.  The SVM classifier learns attributes specific to the corrupted samples which spilled from the validation set into the training set, even though they do not generalize to all immigrants.

\emph{Dota 2} \quad
In the final experiment, we attempt to predict the winner of a Dota 2 video game based on the heroes each team selects at the beginning of the game. This is equivalent to learning an undiscounted value function for a binary, sparse reward function in reinforcement learning. Here, clusters correspond to the type of game played, and we wish to learn a predictor $f$ which generalizes across new game types.

\section{Previous Work}
Previous work has studied various aspects of learning with dependent data, beginning with the necessity of independence for the naive bootstrap \cite{Singh1981}.  Subsequent work has proceeded along two directions: most prominently for time-series data, but also for cluster data.  In time-series data, a stochastic process defines the data dependency, which usually decreases over larger time intervals \cite{Hall1985, Liu1992}.  The common approach to limiting dependency and thus controlling estimator bias and variance is to form blocks of data which are sufficiently spaced in the time domain \cite{Lahiri2003}.

In the clustering setting, bootstrap methods have been proposed for a variety of problem formulations, roughly categorized into model-based and  model-free methods \cite{Cameron2015}.  The first, model-based line of work directly models the within-cluster error correlation, a relatively strong data assumption.  The second, model-free line of work 
performs post-estimation bias-correction for least squares \cite{White1984}, small or unbalanced number of clusters \cite{MacKinnon2017} and non-linear settings \cite{Liang1986}.  Other authors have shown asymptotic analysis results for the residual bootstrap \cite{Andersson2001}, randomized cluster bootstrap, two-stage bootstrap \cite{Davison1997} and multi-way bootstrap \cite{Field2007,Miller2011}.  The fundamental difference in our work is we do not assume samples in different clusters $\hat c$ are independent, due to mistakes in the clustering algorithm.

In \cref{sec:scalable}, we built off the work of \cite{Barnes2017}, who introduced the first asymptotically exact estimator of the OOC loss with clustering errors.  However, they failed to characterize the behavior of the interaction effect, and their estimator scaled $\mathcal{O}(n^3)$, which limited its applicability to smaller problems.

\section{Conclusion}
We argued that interaction effects between clustering and prediction algorithms can cause dangerous and elusive behavior when estimating the out-of-cluster loss in machine learning systems. 
We theoretically characterized when and how this interaction behavior is exhibited, and demonstrated these properties hold in practice on all examined datasets. In particular, we showed the out-of-cluster loss bias is convex and monotonically decreasing -- implying that even a small clustering error can significantly and optimistically bias cross-validation results.
Further, these theoretical properties are necessary to construct the statistical hypothesis test in \cref{sec:testing}, an important practical takeaway to detect for OOC bias.
Our newly introduced estimators are able to correct for this bias at significantly reduced computational cost compared to existing estimators, making the proposed approaches scalable to a wide range of practical applications.

The interaction between clustering and prediction algorithms is one common instance of an interaction effect. \cite{Sculley2014} discussed several other issues in complex machine learning systems, including hidden feedback loops and undeclared data dependencies, which may warrant further exploration.


\subsubsection*{Acknowledgements}
We thank Kin Gutierrez Olivares for the insightful connection to B\'ezier curves. This work was partially supported by DARPA (FA8750-14-2-0244 and FA8750-17-2-0130) and AFRL (FA8750-17-2-0212). Matt was partially supported by an NSF graduate research fellowship.

\clearpage
\bibliographystyle{unsrt}
{
\bibliography{aistats_2019}}

\clearpage
\appendix
\onecolumn
\begin{center}
    \LARGE{Appendix for ``On the Interaction \\ Effects Between Prediction and Clustering''}
\end{center}

\section{Proofs} \label{app:proofs}
\subsection{\cref{thm:monotone}} \label{app:monotone}
We begin by introducing a key lemma about the minimization of function mixtures.
\begin{lemma} \label{lem:monotone}
For functions $a, b: \Theta \to \mathbb{R}$ and $\alpha \in [0, 1]$, 
\begin{gather*}
a(\argmin_{\theta \in \Theta}(\alpha a(\theta) + (1-\alpha) b(\theta))) \\
b(\argmin_{\theta \in \Theta}(\alpha a(\theta) + (1-\alpha) b(\theta)))
\end{gather*}
are monotonically decreasing and increasing, respectively, with respect to $\alpha$.
\end{lemma}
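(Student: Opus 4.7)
The plan is to use the two optimality conditions directly, one at $\alpha_1$ and one at $\alpha_2 > \alpha_1$, and combine them to pin down the signs of $a(\theta_1) - a(\theta_2)$ and $b(\theta_1) - b(\theta_2)$, where $\theta_i$ denotes a minimizer of $\alpha_i a(\theta) + (1-\alpha_i) b(\theta)$. Specifically, optimality of $\theta_1$ at $\alpha_1$ and of $\theta_2$ at $\alpha_2$ gives $\alpha_1 a(\theta_1) + (1-\alpha_1) b(\theta_1) \leq \alpha_1 a(\theta_2) + (1-\alpha_1) b(\theta_2)$ and $\alpha_2 a(\theta_2) + (1-\alpha_2) b(\theta_2) \leq \alpha_2 a(\theta_1) + (1-\alpha_2) b(\theta_1)$. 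Writing $D_a = a(\theta_1) - a(\theta_2)$ and $D_b = b(\theta_1) - b(\theta_2)$, these simplify to
\begin{align*}
\alpha_1 D_a + (1-\alpha_1) D_b &\leq 0, \\
\alpha_2 D_a + (1-\alpha_2) D_b &\geq 0.
\end{align*}

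Next I would extract individual sign information from this pair. The naive move of adding the two inequalities yields only $(\alpha_2 - \alpha_1)(D_a - D_b) \geq 0$, i.e.\ $D_a \geq D_b$, which is strictly weaker than individual monotonicity. Instead, in the interior case $0 < \alpha_1 < \alpha_2 < 1$ I would isolate $D_b$ to obtain $-\frac{\alpha_2}{1-\alpha_2} D_a \leq D_b \leq -\frac{\alpha_1}{1-\alpha_1} D_a$. Since $\alpha_1 < \alpha_2$ implies $\frac{\alpha_1}{1-\alpha_1} < \frac{\alpha_2}{1-\alpha_2}$, these two bounds are mutually consistent only when $D_a \geq 0$ (if $D_a < 0$, multiplying by $D_a$ reverses the ordering and the interval for $D_b$ is empty). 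Plugging $D_a \geq 0$ back into the upper bound then forces $D_b \leq 0$. The endpoint cases $\alpha_1 = 0$ or $\alpha_2 = 1$ I would dispatch by direct substitution: one of the coefficients vanishes in each optimality inequality, and the desired sign follows in a single line.

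Together these yield $a(\theta_1) \geq a(\theta_2)$ and $b(\theta_1) \leq b(\theta_2)$, which is exactly monotone decrease of $a(\theta^*(\alpha))$ and monotone increase of $b(\theta^*(\alpha))$. The main obstacle is precisely the combination step just described: the weaker inequality $D_a \geq D_b$ from naive addition is not enough, and one must keep the two optimality conditions separate and exploit the strict monotonicity of $\alpha \mapsto \alpha/(1-\alpha)$ to rule out $D_a < 0$. As a cleaner but less general alternative worth mentioning, one can observe that $g(\alpha) = \min_{\theta} [\alpha a(\theta) + (1-\alpha) b(\theta)]$ is concave as a pointwise minimum of affine functions, use the envelope theorem to write $a(\theta^*(\alpha)) = g(\alpha) + (1-\alpha) g'(\alpha)$ and $b(\theta^*(\alpha)) = g(\alpha) - \alpha g'(\alpha)$, and then differentiate to see that their derivatives in $\alpha$ are $(1-\alpha) g''(\alpha) \leq 0$ and $-\alpha g''(\alpha) \geq 0$; the comparison-inequality proof above is preferable in the main body because it avoids any smoothness or differentiability hypothesis on $a$, $b$, or $\Theta$.
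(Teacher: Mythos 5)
Your proof is correct and follows essentially the same route as the paper's: both arguments rest on writing down the two optimality inequalities for the minimizers at the two mixture weights and combining them to extract the individual signs of $a(\theta_1)-a(\theta_2)$ and $b(\theta_1)-b(\theta_2)$ (the paper subtracts the inequalities to first sign $\Delta = a - b$ and then substitutes back, whereas you sandwich $D_b$ and invoke monotonicity of $\alpha/(1-\alpha)$, but these are the same exchange argument in different algebraic clothing). Your version has the minor advantage of using only non-strict inequalities, so it does not need the paper's slightly delicate strictness claims in its Case~2.
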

\begin{proof}
Let $\Delta(\theta) = a(\theta) - b(\theta)$, $1 \geq j > i \geq 0$ and
\begin{gather}
\theta_i \in \argmin_{\theta \in \Theta} b(\theta) + i\Delta(\theta) \nonumber \\
\theta_j \in \argmin_{\theta \in \Theta} b(\theta) + j\Delta(\theta) \nonumber
\end{gather}
Then $a$ is monotonically decreasing with respect to $\alpha$  if and only if $a(\theta_i) \geq a(\theta_j)$.

\textbf{Case 1:} $\theta_i = \theta_j$.  Then $a(\theta_i) = a(\theta_j)$, $b(\theta_i) = b(\theta_j)$ and the statements holds.

\textbf{Case 2:} $\theta_i \neq \theta_j$.  Then both the following conditions must be true.
\begin{gather}
b(\theta_j) - b(\theta_i) + i\Delta(\theta_j) - i\Delta(\theta_i) > 0 \label{eq:opti} \\
b(\theta_j) - b(\theta_i) + j\Delta(\theta_j) - j\Delta(\theta_i) < 0 \label{eq:optj}
\end{gather}
If \cref{eq:opti} did not hold, then $\theta_j$ would have been optimal at $\alpha = i$, i.e.\ $\theta_j \in \argmin_{\theta \in \Theta} b(\theta) + i\Delta(\theta)$.  Likewise, if \cref{eq:optj} did not hold, then $\theta_i \in \argmin_{\theta \in \Theta} b(\theta) + j\Delta(\theta)$.

Together, they imply
\begin{align}
b(\theta_j) - b(\theta_i) + i\Delta(\theta_j) - i\Delta(\theta_i) &> b(\theta_j) - b(\theta_i) + j\Delta(\theta_j) - j\Delta(\theta_i) \nonumber \\
i\Delta(\theta_j) - i\Delta(\theta_i) &> j\Delta(\theta_j) - j\Delta(\theta_i) \nonumber \\
(i - j)(\Delta(\theta_j) - \Delta(\theta_i)) &> 0 \nonumber \\
\Delta(\theta_j) - \Delta(\theta_i) &< 0 \nonumber
\end{align}
since $i - j < 0$.  Plugging this into \cref{eq:opti},
\begin{align}
b(\theta_j) - b(\theta_i) + i\Delta(\theta_j) - i\Delta(\theta_i) &> 0 \nonumber \\
b(\theta_j) - b(\theta_i) &> i(\Delta(\theta_i) - \Delta(\theta_j)) \nonumber \\
b(\theta_j) - b(\theta_i) &> 0 \label{eq:bineq}
\end{align}
which proves the second statement.  Finally, plugging \cref{eq:bineq} into \cref{eq:optj} concludes the proof.
\begin{align}
(1-j)(b(\theta_j) - b(\theta_i)) + j(a(\theta_j) - a(\theta_i)) &<0 \nonumber \\
a(\theta_j) - a(\theta_i) &< 0 \nonumber
\end{align}
\end{proof}

The proof of \cref{thm:monotone} follows.
\begin{proof}
\textbf{Direction $\mathcal{V}$ to $\mathcal{T}$}
We say $f$ is optimal under its training distribution if
\begin{equation}
f(\cdot | \mathcal{T}) \in \argmin_{f \in \mathcal{F}} \mathbb{E}_{x, y \sim P_{\mathcal{T}}} \ell(f(x), y).  \nonumber
\end{equation}
Let $f_0, f_1, \dotsc, f_{n}$ be models learned at each level of dependency leakage, such that each model is optimal under its training distribution, i.e.
\begin{equation}
f_i \in \argmin_{f \in \mathcal{F}} \mathbb{E}_{x, y \sim M_{P_\mathcal{T}, P_\mathcal{V}}(1-\frac{i}{n}, \frac{i}{n})} \ell(f(x), y).  \nonumber
\end{equation}
The sequence $e_0, e_1, \dotsc, e_{n}$ is monotonically decreasing when
\begin{equation}
e_i - e_{i+1} \geq 0 \quad \forall i \in \{0, \dotsc, n-1\}.  \nonumber
\end{equation}
Starting from the definition of $e$ and using the notational shorthand $\ell_P(f) = \mathbb{E}_{x, y \sim P}\ell(f(x), y)$,
\begin{align}
e_i &= \mathbb{E}_{x, y \sim P_\mathcal{V}} \ell(f_i(x), y) \nonumber \\
 &= \ell_{P_\mathcal{V}}(f_i) \nonumber \\
&= \ell_{P_\mathcal{V}}(\argmin_{f \in \mathcal{F}} \mathbb{E}_{x, y \sim M_{P_\mathcal{T}, P_\mathcal{V}}\left(1-\frac{i}{n}, \frac{i}{n}\right)} \ell(f(x), y)) \nonumber \\
&= \ell_{P_\mathcal{V}}\left(\argmin_{f \in \mathcal{F}} \frac{i}{n} \ell_{P_\mathcal{V}}(f) + \left(1 - \frac{i}{n}\right) \ell_{P_\mathcal{T}}(f) \right)  \nonumber \\
\end{align}
By \cref{lem:monotone}, $e$ is monotonically decreasing with respect to $\frac{i}{n}$, and thus also with respect to $i$ since $n$ is a fixed constant.

\textbf{Direction $\mathcal{T}$ to $\mathcal{V}$.} In this direction, $e$ will further be linear:
\begin{align*}
e_0 &= \mathbb{E}_{x, y \sim P_\mathcal{V}, \mathcal{T} \overset{n}{\sim} P_\mathcal{T}} \ell(f(x|\mathcal{T}), y) \\
e_n &= \mathbb{E}_{x, y \sim P_\mathcal{T}, \mathcal{T} \overset{n}{\sim} P_\mathcal{T}} \ell(f(x|\mathcal{T}), y) \\
e_i &= \mathbb{E}_{x, y \sim M_{P_\mathcal{T}, P_\mathcal{V}}\left(\frac{i}{n}, 1 - \frac{i}{n}\right), \mathcal{T} \overset{n}{\sim} P_\mathcal{T}} \ell(f(x|\mathcal{T}), y) \\
&= \left(\frac{i}{n}\right)\mathbb{E}_{x, y \sim P_\mathcal{T}, \mathcal{T} \overset{n}{\sim} P_\mathcal{T}} \ell(f(x|\mathcal{T}), y) + \left(1 - \frac{i}{n}\right)\mathbb{E}_{x, y \sim P_\mathcal{V}, \mathcal{T} \overset{n}{\sim} P_\mathcal{T}} \ell(f(x|\mathcal{T}), y) \\
&= \left(\frac{i}{n}\right) e_n + \left(1 - \frac{i}{n}\right)e_0
\end{align*}
and $e_n \leq e_0$ by the assumption that $f$ is optimal under its training distribution.
\end{proof}

\subsection{\cref{thm:convex}} \label{app:convex}
We begin by introducing a lemma on the minimization of mixtures of convex functions.
\begin{lemma} \label{lem:convex}
For $\alpha \in [0,1]$, let $a, b: \Theta \to \mathbb{R}$ be strictly convex and differentiable (where $\dot a$ denotes $\frac{\partial a}{\partial \theta}$) over
\begin{align*}
\Theta^* &= \{ \theta \in \argmin_{\theta \in \Theta}(\alpha b(\theta) + \alpha \Delta(\theta)) \} \quad \forall \alpha \in [0, 1] \\
&= \{ \theta \in g(\alpha) \} \quad \forall \alpha \in [0, 1] \subseteq \Theta.
\end{align*}
If $\frac{\dot a}{\dot b}$ is convex, decreasing over $\Theta^*$, then
\begin{equation*}
a(\argmin_{\theta \in \Theta}(\alpha a(\theta) + (1 - \alpha)b(\theta)))
\end{equation*}
is convex over $\alpha$.
\end{lemma}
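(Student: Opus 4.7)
The plan is to reduce the convexity of $h(\alpha) := a(\theta^*(\alpha))$, where $\theta^*(\alpha) := \argmin_{\theta \in \Theta}\{\alpha a(\theta) + (1-\alpha) b(\theta)\}$, to a chain of one-dimensional composition arguments along the optimal curve $\Theta^*$, using the hypothesis on $r := \dot a/\dot b$ to control curvature.

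First I would use strict convexity and differentiability of $a$ and $b$ to show that $\theta^*(\alpha)$ is unique and satisfies the first-order condition $\alpha \dot a(\theta^*) + (1-\alpha)\dot b(\theta^*) = 0$, equivalently $r(\theta^*(\alpha)) = -(1-\alpha)/\alpha$. For $\alpha \in (0,1)$ this forces $r < 0$, so $\dot a$ and $\dot b$ have opposite signs on the interior of $\Theta^*$. Combining this with \cref{lem:monotone} (which gives $a\circ\theta^*$ decreasing and $b\circ\theta^*$ increasing in $\alpha$) and the fact that $r$ is decreasing on $\Theta^*$ (so $\theta^*$ is monotone in $\alpha$), I can pin down $\dot a(\theta^*) > 0$ and $\dot b(\theta^*) < 0$. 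Hence, restricted to $\Theta^*$, $a$ is convex \emph{increasing} and $b$ is convex \emph{decreasing}.

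Next I would change variables to $\beta := -r(\theta)$. The hypothesis that $r$ is convex and decreasing in $\theta$ on $\Theta^*$ makes $\beta$ concave and increasing in $\theta$, and hence its inverse $\theta(\beta)$ convex and increasing in $\beta$. The FOC from the previous step also gives $\beta(\alpha) = 1/\alpha - 1$, which is convex and decreasing on $(0,1]$. By the standard rule ``convex-increasing composed with convex is convex,'' the map $\alpha \mapsto \theta^*(\alpha) = \theta(\beta(\alpha))$ is convex (and decreasing, since increasing composed with decreasing is decreasing). One further application of the same rule, with $a$ as the convex-increasing outer function and $\theta^*$ as the convex inner function, yields $h = a \circ \theta^*$ convex in $\alpha$.

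The main obstacle is the dimensionality of $\Theta$: when $\Theta \subseteq \mathbb{R}^d$ with $d > 1$, the hypotheses that $r$ is ``convex'' and ``decreasing'' have to be interpreted along the one-dimensional curve $\Theta^*$. I would parameterize $\Theta^*$ by $\beta$ itself (or by arc length) and verify that the hypothesis transfers to this scalar parameterization, after which the composition chain applies directly. A secondary subtlety is ensuring that $\dot a > 0$ and $\dot b < 0$ hold uniformly on the interior of $\Theta^*$, not merely at isolated points, so that the convex-increasing composition rule may be invoked throughout.
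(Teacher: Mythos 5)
Your proposal is correct and follows essentially the same route as the paper's proof: both use the first-order optimality condition to express a function of $\alpha$ as a derivative ratio at the optimizer (you use $-\dot a/\dot b = (1-\alpha)/\alpha$, the paper uses $-\dot b/\dot\Delta = \alpha$, which are equivalent reparameterizations), then invert and chain the standard convex/concave monotone composition rules, together with the sign information from \cref{lem:monotone}. Your explicit handling of the signs of $\dot a$ and $\dot b$ along $\Theta^*$ and your flag about the one-dimensional interpretation of ``$\dot a/\dot b$ convex, decreasing'' when $\Theta \subseteq \mathbb{R}^d$ are, if anything, more careful than the paper's treatment.
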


\begin{proof}
If $\frac{\dot a}{\dot b}$ is convex, decreasing then $\frac{-\dot b}{\dot \Delta}$ is also convex decreasing.
\begin{equation}
\frac{\dot a}{\dot b} \textup{convex, decreasing} \Leftrightarrow \frac{-\dot \Delta}{\dot b} \textup{concave, increasing}
\end{equation}
because $\frac{-\dot \Delta}{\dot b} = \frac{\dot b - \dot a}{\dot b} = 1 - \frac{\dot a}{\dot b}$.

Further, we know $\frac{-\dot \Delta}{\dot b} \geq 0$ because $\dot a \leq 0$ and $\dot b \geq 0$ by \cref{lem:monotone}.  Then $\frac{-\dot b}{\dot \Delta}$ is convex decreasing by the composition of the convex, decreasing function $\frac{1}{x}$ and the concave increasing $\frac{-\dot \Delta}{\dot b}$.  Note in the case where $\dot \Delta = 0$, $g(\alpha)$ is constant and the lemma holds.

At the minimum of $b(\theta) + \alpha \Delta(\theta)$,
\begin{align*}
0 &= \dot b + \alpha \dot \Delta \\
\alpha &= \frac{- \dot b}{\dot \Delta}
\end{align*}
Thus, $g^{-1}(\theta) = \frac{-\dot b}{\dot \Delta}$ is convex, decreasing and $g(\alpha)$ is concave, increasing.  Finally $a(g(\alpha))$ is convex, decreasing by the composition of a convex, non-increasing and concave function.  
\end{proof}

The proof for \cref{thm:convex} follows.
\begin{proof}
\textbf{Direction $\mathcal{T}$ to $\mathcal{V}$}
Holds by \cref{thm:monotone}, as linearity implies convexity.

\textbf{Direction $\mathcal{V}$ to $\mathcal{T}$}
Starting from the definition of $e$ and using the notational shorthand $\ell_P(f) = \mathbb{E}_{x, y \sim P}\ell(f(x), y)$,
\begin{align}
e_i &= \mathbb{E}_{x, y \sim P_\mathcal{V}} \ell(f_i(x), y) \nonumber \\
 &= \ell_{P_\mathcal{V}}(f_i) \nonumber \\
&= \ell_{P_\mathcal{V}}(\argmin_{f \in \mathcal{F}} \mathbb{E}_{x, y \sim M_{P_\mathcal{T}, P_\mathcal{V}}(1-\frac{i}{n}, \frac{i}{n})} \ell(f(x), y)) \nonumber \\
&= \ell_{P_\mathcal{V}}\left(\argmin_{f \in \mathcal{F}} \frac{i}{n} \ell_{P_\mathcal{V}}(f) + \left(1 - \frac{i}{n}\right) \ell_{P_\mathcal{T}}(f) \right)  \nonumber \\
\end{align}
By \cref{lem:convex}, $e$ is convex with respect to $\frac{i}{n}$, and thus also with respect to $i$ since $n$ is a fixed constant.
\end{proof}

\subsection{Proof of \cref{thm:basis}} \label{app:basis_proof}
\begin{proof}
\begin{equation*}
(A\Psi)_{ij} = \underset{k_{n} \sim \textup{Binomial}(n, p_i)}{\mathbb{E}} \psi_j\left(\frac{k_{n}}{n}\right)
\end{equation*}
By the weak law of large numbers, $\frac{k_{n}}{n} \overset{p}{\to} p_i$.  Further, $\psi_j\left(\frac{k_{n}}{n}\right) \overset{p}{\to} \psi_j(p_i)$ by the continuous mapping theorem.  Finally, $\mathbb{E}\psi_j\left(\frac{k_{n}}{n}\right) \rightsquigarrow \mathbb{E}\psi_j(p_i) = \psi_j(p_i)$ by the Portmanteau lemma.  The matrix formed by $\psi_j(p_i)$ is invertible by the Unisolvence theorem when $p_0, \dotsc, p_s$ are unique.  
\end{proof}

\subsection{Proof of \cref{thm:sketch}} \label{app:sketch_proof}
\begin{proof}
Let $e^\prime$ be the solution to the sketched system $Se^\prime = b$.  Then
\begin{align*}
Se^\prime &= Ae = b \\
e^\prime &= S^{-1}Ae \\
e^\prime_0 &= (S^{-1}A)_{00}e_0 + \sum_{i=1}^{n}(S^{-1}A)_{0i}e_i \\
e^\prime_0 - e_0 &= \sum_{i=1}^{n}(S^{-1}A)_{0i}e_i.
\end{align*}
Let $s^\prime$ be the first row of $S^{-1}$.  By the Cauchy-Schwarz inequality, for all $i \geq 1$
\begin{align*}
|(S^{-1}A)_{0i}| &= |s^\prime \cdot A_i| \\
&= |s^\prime \cdot A_i - s^\prime \cdot S_{r(i)}| \\
&\leq \Vert s^\prime\Vert \Vert A_i - S_{r(i)})\Vert \\
&= \epsilon \Vert s^\prime\Vert.
\end{align*}
Finally, by \cref{thm:monotone}
\begin{align*}
|e_o^\prime - e_0| &\leq \sum_{i=1}^{n}|(S^{-1}A)_{0i}|e_i \\
&\leq \sum_{i=1}^{n}\epsilon \Vert s^\prime \Vert e_i \\
&\leq \epsilon n \Vert s^\prime \Vert e_0.
\end{align*}
\end{proof}

\section{Connections to the Bernstein basis and B\'ezier curves} \label{app:bernstein}
\subsection{Bernstein basis}
Recall that a Bernstein basis of degree $n$ is defined as
\begin{equation}
b_{j, n}(x) = {n \choose j}x^j(1-x)^{n-j} \quad j = 0, \dotsc, n
\end{equation}
and that this forms a basis for polynomials at most degree $n$. Then the Bernstein polynomial is defined as
\begin{equation}
    B_n(x) = \sum_{j=0}^n \beta_j b_{j, n}(x)
\end{equation}
where $B_j$ are the Bernstein coefficients. The B3 estimator $b_i = \sum_{j=0}^n e_jA_{ij}$ is equivalent to solving for the Bernstein coefficients $e_j = \beta_j$, where the Berstein basis is $A_{ij} = b_{j,n}(p_i)$.

\subsection{B\'ezier curves}
B\'ezier curves are closely related to Bernstein polynomials, using slightly different notation
\begin{align}
    B(t) &= \sum_{j=0}^n {n \choose j} t^j (1-t)^{n-j} \mathbf{P}_j \\
    &= \sum_{j=0}^n b_{j,n}(t) \mathbf{P}_j
\end{align}
where $\mathbf{P}_j$ are the B\'ezier control points. Once again, $A_{ij}$ from the B3 estimator is equivalent to the Bernstein basis function $b_{j,n}(p_i)$, and we solve for the B\'ezier control points $\mathbf{P}_0, \dotsc, \mathbf{P}_n $

\section{Additional Experiments} \label{app:additional}
\subsection{Additions to \cref{fig:fullfit}}
Due to space limitations, the Dota 2 and Parkinson's experiments were excluded from \cref{fig:fullfit}. We include them here in \cref{fig:fullfit_app} to demonstrate the theoretical properties in \cref{sec:properties} held across all datasets. The Dota 2 experiments had higher variance than others, though the monotonic and convex trend appears to hold true. 

\begin{figure*}[h]
    \begin{subfigure}[t]{0.495\textwidth}
        \centering
        \includegraphics[width=\textwidth]{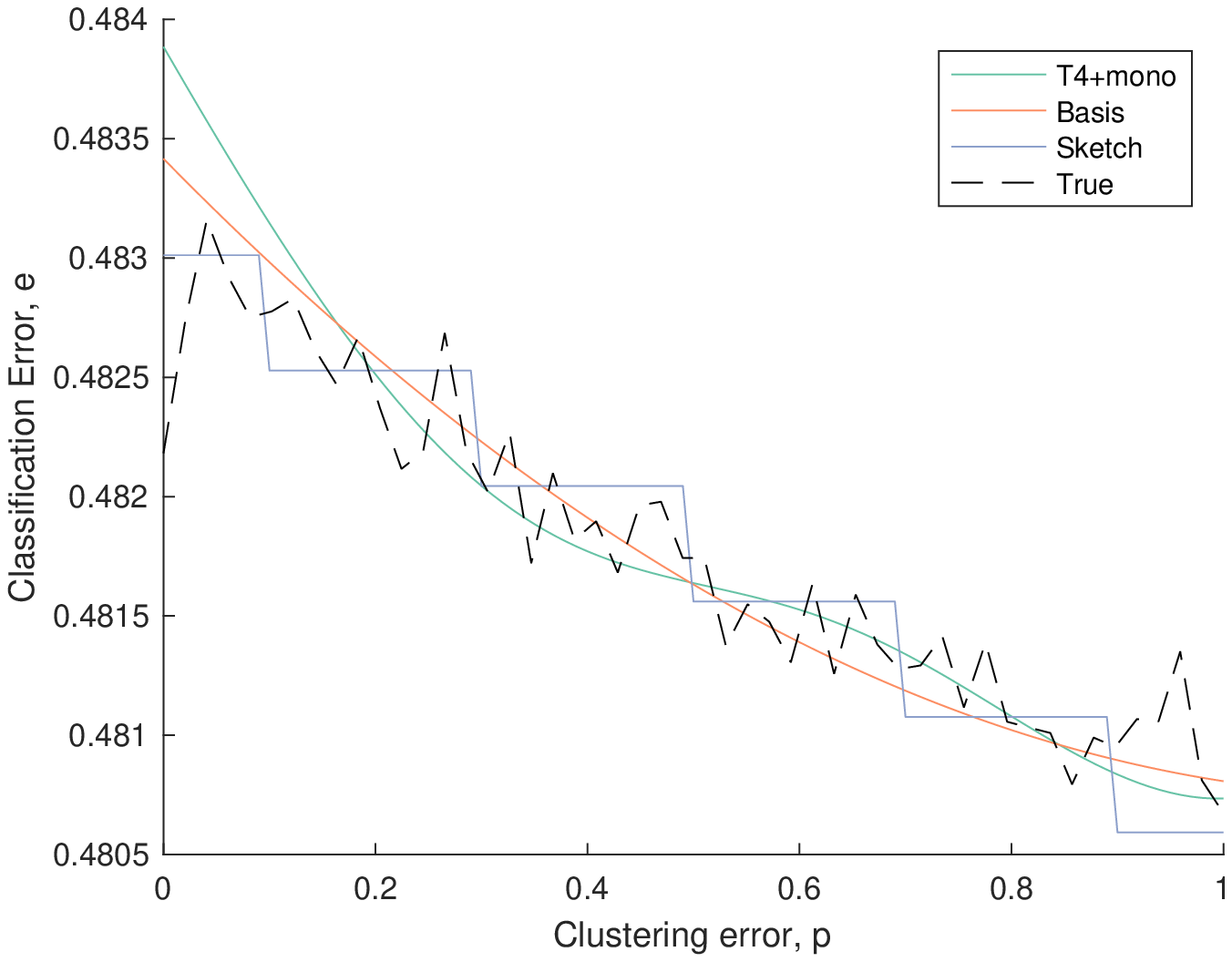}
        \caption{Dota 2}
    \end{subfigure}
    \begin{subfigure}[t]{0.495\textwidth}
        \centering
        \includegraphics[width=\textwidth]{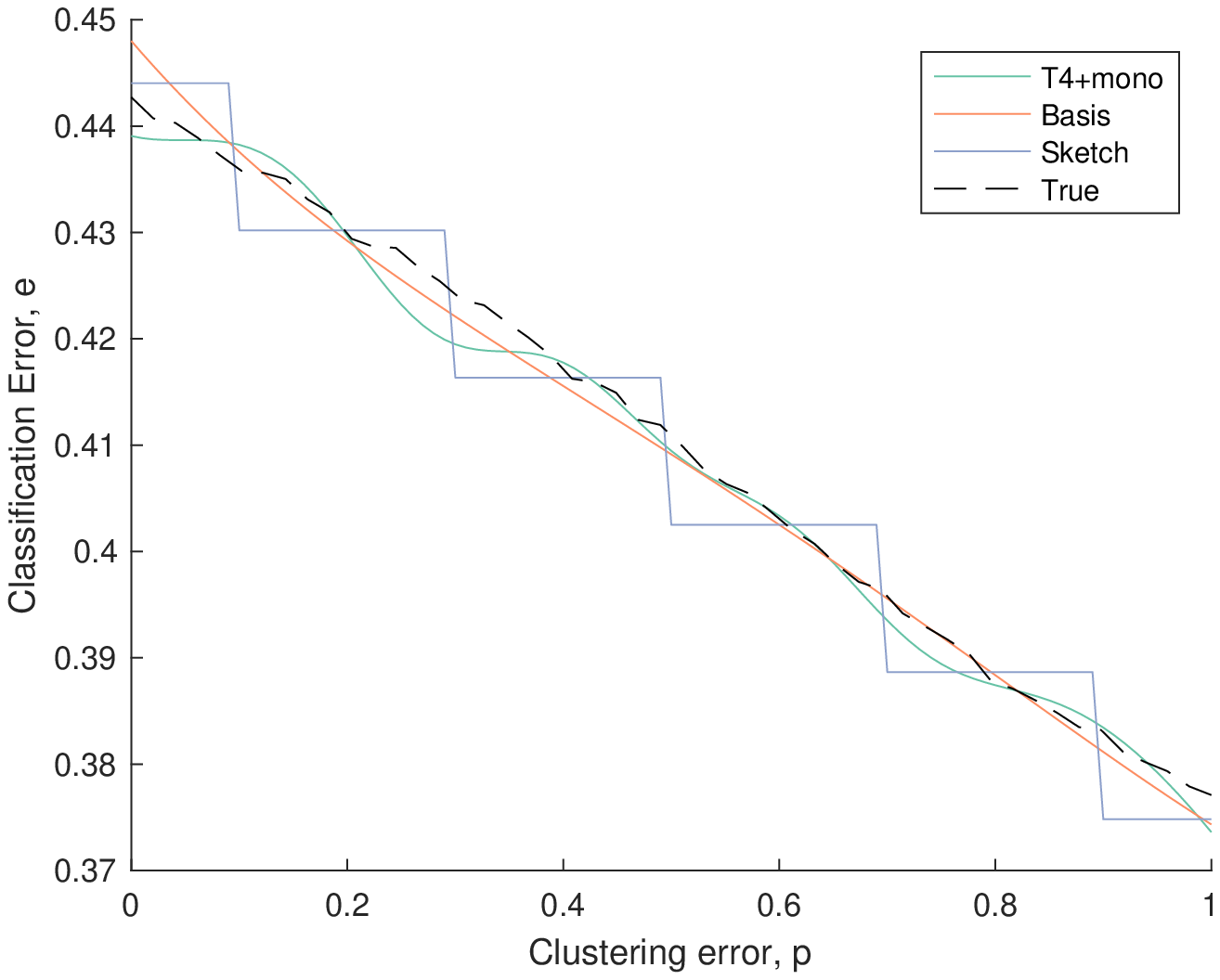}
        \caption{Parkinson's}
    \end{subfigure}
    \caption{Additions to \cref{fig:fullfit}. Empirical results show the loss is indeed convex and monotonically decreasing, validating our theoretical results in \cref{sec:properties}.  Note our methods are able to recover the full loss in addition to the true OOC loss $e_0$.}
    \label{fig:fullfit_app}
\end{figure*}

\subsection{Synthetic Experiments}
As an initial exploratory experiment, we generated synthetic data according to a partition model with $k=2$ parts, $m=2n$ rows in $A$ (i.e.  levels of dependency leakage) and initial dependency leakage probability $p_0=0.1$.  Our learner is a linear regression model with mean squared error loss.  Our exploratory synthetic results, presented in the Figure \ref{fig:synthetic} box plot, demonstrate that the basis function and matrix sketching estimators perform comparably or better than baseline methods.

\begin{figure*}[h]
        \centering
        \includegraphics[width=0.495\textwidth]{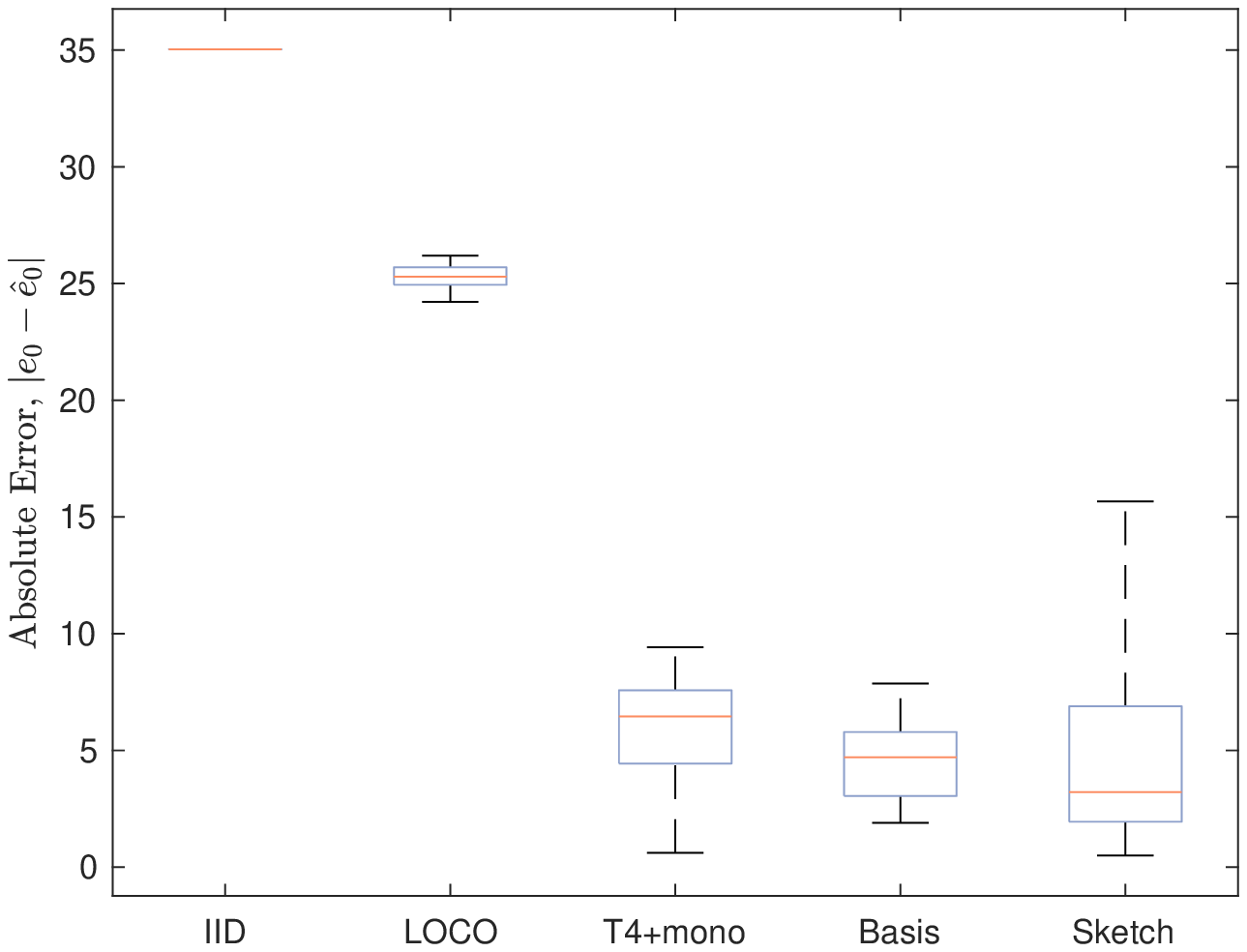}
        \includegraphics[width=0.495\textwidth]{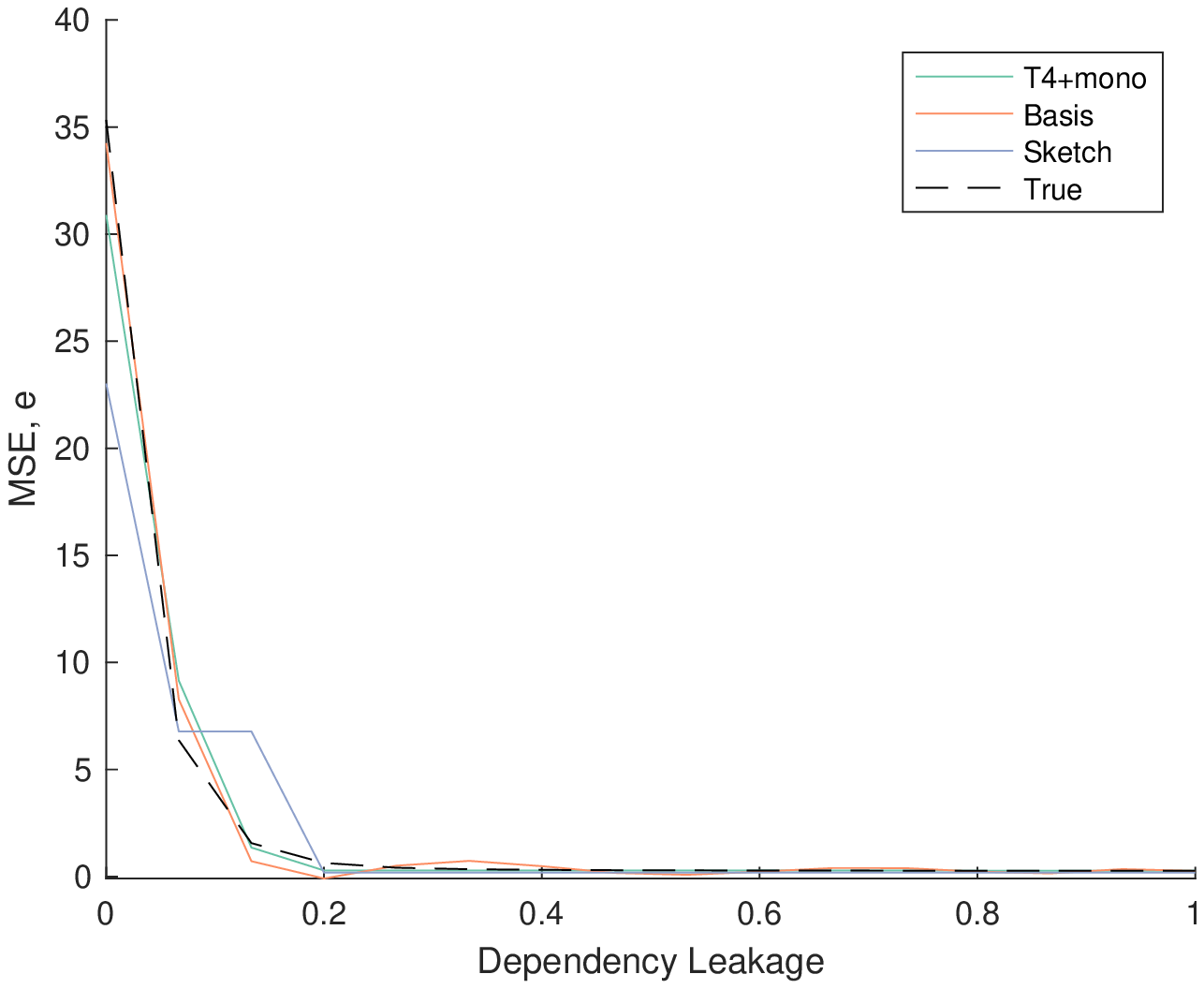}
    \caption{Synthetic regression results.}
    \label{fig:synthetic}
\end{figure*}

\section{Experimental Details} \label{app:details}
In the sketching approximation, we formed $k$ nearly equally sized groups of adjacent columns from $A$ when forming the sketched matrix $S$. Even after sketching, we found it beneficial to add some regularization comparable to T4+mono, referred to as $\lambda_s$ (the regularization used in T4+mono is referred to as $\lambda_{\textup{T4}}$). We found that other approaches, including using $k$-medoids to group the columns of $A$, did not provide any benefits and were more complicated. In all experiments we set $k=7$.

In the basis function approximation, we found that using simple, low-order polynomials was sufficient. Higher order polynomials tended to be unstable. After observing $b$, we chose to use either a 2nd or 7th order polynomial, depending on the curvature of $b$.

The whisker plots in \cref{fig:boxplots} are generated over 10 independent trials, where the whiskers correspond to most extreme values over those trials (i.e.\ no outliers removed).

The complete set of experimental parameters are shown in \cref{tab:params}. We made an effort to limit fitting to a specific dataset, and kept most parameters the same across all experiments. In the Dota 2 experiments, the availability of sufficient training data allowed us to increase $|\mathcal{T}|$ to 1000. Further, after completing the Heart and Census experiments, we reduced the number of rows $m$ in $A$ by an order of magnitude to speed up experimentation, and correspondingly increased the regularization $\lambda$.


\begin{sidewaystable}
  \caption[Dataset details and hyperparameters used in all experiments.]{Parameters used in all experiments. $n$ is the number of samples in the training set, $|\mathcal{V}|$ is the number of samples in the validation set, $t$ is the number of resamples in \cref{alg:b3}, $\lambda$'s are the regularization strengths in the T4+mono and sketching method, $m$ is the number of corruption levels (i.e.\ the number of rows in $A$), $k$ is the number of sketching groups and $d$ is the number of features in the dataset.}
  \centering
  \begin{tabular}{p{1.5cm}ccccccccccp{1.5cm}p{2cm}p{2cm}p{5cm}}
    \toprule
    & \multicolumn{14}{c}{Parameter}                   \\
    \cmidrule(r){2-15}
    Dataset     & $n$ & $|\mathcal{T}|$ & $|\mathcal{V}|$ & $d$ & t & $\lambda_{\textup{T4}}$ & $\lambda_s$ & $s$ & $m$ & $k$ & Latent cluster & Training clusters & Validation clusters & Features \\
    \midrule
     Synthetic & $\infty$ & 15 & 1000 & 2 & 1000 & 0.1 & 0.01 & 7 & 30 & 10 & - & - & - & - \\
     Heart\footnote{https://archive.ics.uci.edu/ml/datasets/heart+Disease} & 100 & 100 & 100 & 12 & 1000 & 10 & 0.1 & 7 & 200 & 7 & Location & Cleveland, VA, Switzerland & Hungary & age, sex, cp, trestbps, chol, fbs, restecg, thalach, exang, oldpeak, slope, thal \\
     1994 US Census\footnote{https://archive.ics.uci.edu/ml/datasets/adult} & 100 & 100 & 100 & 5 & 10000 & 10 & 0.1 & 7 & 200 & 7 & Native country & United States, El Salvador, Germany, Mexico, Philippines, Puerto Rico & India, Canada & age, education\_num, hours\_per\_week, race, occupation \\
     Parkinson\footnote{https://archive.ics.uci.edu/ml/datasets/Parkinson+Speech+Dataset+with++Multiple+Types+of+Sound+Recordings} & 100 & 100 & 100 & 26 & 10000 & 1000 & 0.1 & 2 & 20 & 7 & Subject & $2, 3, 4,$ $6, 7, 8, \dotsc$ & $1, 5, 9, \dotsc$ & jitter\_local, jitter\_abs, jitter\_rap, jitter\_ppq5, jitter\_ddp, shimmer\_local, shimmer\_db, shimmer\_apq3, shimmer\_apq5, shimmer\_apq11, shimmer\_dda, ac, nth, htn, median\_pitch, mean\_pitch, std\_dev, min\_pitch, max\_pitch, pulses, periods, mean\_period, std\_dev\_period, unvoiced, breaks, deg\_breaks \\
     Dota 2\footnote{https://archive.ics.uci.edu/ml/datasets/Dota2+Games+Results} & 100 & 100 & 100 & 114 & 1000 & 1000 & 0.1 & 2 & 20 & 7 & Type & 1, 2 & 3 & hero0, hero1, $\dotsc$, hero112 \\
    \bottomrule
  \end{tabular}
  \label{tab:params}
\end{sidewaystable}

\end{document}